\newcommand{\ff}{\boldsymbol{f}}
\newcommand{\reals}{\mathbb{R}}
\newcommand{\prob}{\mathbb{P}}
\newcommand{\probarg}[1]{\prob\left[{#1}\right]}
\newcommand{\needcite}[1]{}
\newcommand{\be}{\begin{equation}}
\newcommand{\ee}{\end{equation}}
\newcommand{\benn}{\begin{equation*}}
\newcommand{\eenn}{\end{equation*}}
\newcommand{\bea}{\begin{eqnarray*}}
\newcommand{\eea}{\end{eqnarray*}}
\newcommand{\bean}{\begin{eqnarray}}
\newcommand{\eean}{\end{eqnarray}}
\newcommand{\pset}{\mathcal{P}}
\newcommand{\truep}{p^*}
\newcommand{\taub}{\bar{\tau}}
\newcommand{\taut}{\tilde{\tau}}
\newcommand{\uu}{\boldsymbol{u}} 
\newcommand{\xx}{\boldsymbol{x}} 
\newcommand{\yy}{\boldsymbol{y}} 
\newcommand{\zz}{\boldsymbol{z}}
\renewcommand{\aa}{\boldsymbol{a}}
\newcommand{\yh}{\hat{y}}
\newcommand{\muv}{{\boldsymbol{\mu}}}
\newcommand{\mub}{\bar{\mu}}
\newcommand{\deltav}{\boldsymbol{\delta}}
\newcommand{\lambdav}{\boldsymbol{\lambda}}
\newcommand{\mut}{\tilde{\mu}}
\newcommand{\muvt}{\tilde{\muv}}
\newcommand{\deltab}{\bar{\delta}}
\newcommand{\deltat}{\tilde{\delta}}
\newcommand{\ignore}[1]{}
\newcommand{\comment}[1]{}
\newcommand{\lambdab}{\bar{\lambda}}
\newcommand{\lclmargpoly}{{\cal{M}}_L}
\renewcommand{\eqref}[1]{Eq.~(\ref{#1})}
\newcommand{\figref}[1]{Figure \ref{#1}}
\newcommand{\secref}[1]{Section \ref{#1}}
\newcommand{\thmref}[1]{Thm.~\ref{#1}}
\newcommand{\lemref}[1]{Lem.~\ref{#1}}
\newcommand{\cororef}[1]{Cor.~\ref{#1}}
\newcommand{\cond}[2]{\left(#1 \mid #2\right)}
\newcommand{\expect}[2]{\mathbb{E}_{#1}\left[{#2}\right]}
\newtheorem{theorem}{Theorem}[section]
\newtheorem{lemma}[theorem]{Lemma}
\newtheorem{corollary}[theorem]{Corollary}
\newtheorem{definition}{Definition}
\newcommand{\linstatpoly}[1]{\mathcal{P}({\ifthenelse{\equal{#1}{}}{\muv}{#1}})}
\newcommand{\maxexc}[1]{\max_{p\in{\linstatpoly{}}}{\sum_{\zz\neq \yy}{p(\xx,\zz)}}}
\title{Robust Conditional Probabilities}
\author{
   Yoav Wald \\
  \texttt{yoav.wald@mail.huji.ac.il}
  \And
  Amir Globerson \\
  \texttt{gamir@post.tau.ac.il}
}
\begin{document}
\maketitle


\begin{abstract}
Conditional probabilities are a core concept in machine learning. For example, optimal prediction of a label $Y$ given an input $X$ corresponds to maximizing the conditional probability of $Y$ given $X$. A common approach to inference tasks is learning a model of conditional probabilities. However, these models are often based on strong assumptions (e.g., log-linear models), and hence their estimate of conditional probabilities is not robust and is highly dependent on the validity of their assumptions.

Here we propose a framework for reasoning about conditional probabilities without assuming anything about the underlying distributions, except knowledge of their second order marginals, which can be estimated from data. We show how this setting leads to guaranteed bounds on conditional probabilities, which can be calculated efficiently in a variety of settings, including structured-prediction. Finally, we apply them to semi-supervised deep learning, obtaining results competitive with variational autoencoders.
\end{abstract}


\section{Introduction}

In classification tasks the goal is to predict a label $Y$ for an
object $X$. Assuming that the joint distribution of these two
variables is $p^*(\xx,\yy)$ then optimal prediction\footnote{In the
  sense of minimizing prediction error.} corresponds to returning the
label $\yy$ that maximizes the conditional probability $p^*(\yy| \xx)$.
 Thus, being able to reason about conditional probabilities is fundamental to
machine learning and probabilistic inference. 

In the fully supervised setting, one can sidestep the task of estimating 
conditional probabilities by directly learning a classifier in a discriminative fashion. However, in unsupervised or semi-supervised settings, a reliable estimate of the conditional distributions becomes important. For example, consider an unlabeled input $X$. If we had a reliable estimate of $p^*(\yy| \xx)$ we could decide whether to label the example or not, which could be used further within self-training \cite{mcclosky2006effective,weiss-EtAl:2015} or active learning contexts. Furthermore, as we show in our empirical results, such conditional probability estimates can be used as a regularizer for semi-supervised learning.
 
\comment{
 This
paper considers classification problems with categorical features and
labels, where $\mathcal{X} = [k]^n, \mathcal{Y} = [l]^m$ for some
integers $k,l,m,n$. Since learning algorithms are usually set in a
probabilistic world and assume data is generated by some distribution
$p^*(\xx,\yy)$, answering the probabilistic query $p^*\cond{\yy}{\xx}$
is of significant interest for classification.  Answers to this query
usually rely on a model $p\cond{\yy}{\xx}$ trained to approximate
$p^*\cond{\yy}{\xx}$, or a model for the joint distribution
$p(\xx,\yy)$ that approximates $p^*$ from which $p\cond{\yy}{\xx}$ is
inferred.
}


There are of course many approaches to ``modelling'' conditional distributions, from logistic regression to conditional random fields. However, these do not
come with any guarantees of approximations to the true underlying conditional distributions of $p^*$ and thus cannot be used to reliably reason about these. This is due to the fact that such models make assumptions about the conditionals (e.g., conditional independence or parametric), which are unlikely to be satisfied in practice.


As an illustrative example for our motivation and setup, consider a set of $n$ binary variables $X_1,...,X_n$ whose distribution we are
interested in. Suppose we have enough data to conclude that
$\probarg{X_1=1|X_2 =1} = 1$. This lets us reason about many other
probabilities. For example, we know that
$\probarg{X_1=1|X_2=1,\ldots,X_n=x_n}=1$ for {\em any} setting
of the $x_3,\ldots,x_n$ variables.
This is a simple but powerful observation, as it translates knowledge
about probabilities over small subsets to probability over large
subsets. Now, what happens when $\probarg{X_1=1|X_2 =1} = 0.99$? In
other words, what can we say about
$\probarg{X_1=1|X_2=1,\ldots,X_n=0}$ given information about
conditional probability $\probarg{X_i=x_i|X_j =x_j}$. As we show here, it is still
possible to reason about such conditional probabilities even under this partial knowledge.
 
Motivated by the above, we propose a novel model-free approach for reasoning about conditional probabilities. Specifically, we shall
show how conditional probabilities can be lower bounded without making strong assumptions about the underlying distribution. The only assumption we make
is that certain low-order marginals of the distribution are known. We then show how these can be used to infer lower bounds on conditional distributions
that are guaranteed to hold. One of the surprising outcomes of our analysis is that these lower bounds can be calculated efficiently, and often
have an elegant closed form. Finally, we show how these bounds can be used as a regularizer in a semi-supervised setting, obtaining results that are competitive with variational autoencoders \cite{DBLP:conf/nips/KingmaMRW14}.

\section{Problem Setup}
We begin by defining notations to be used in what follows.  Let $X$ denote features and $Y$ denote labels. Assume we have $n$ features, denoted by random variables $X_1,\ldots,X_n$. If we have a single label we will denote it by $Y$. Otherwise, a multivariate label will be denoted by $Y_1,\ldots,Y_r$. We assume all variables are discrete (i.e., can take on a finite set of values). Assume that $X,Y$ are generated by some unknown underlying distribution $\truep(X,Y)$. Here we will assume that although we do not know $\truep$ we have access to the expected value of some vector function $\ff:X,Y\to \reals^d$ under $\truep$.\footnote{Abusing notation, we use $X$ to denote both the random variable and its range of values.}\footnote{For simplicity we assume the expectation is exact. Generally it is of course only approximate, but concentration bounds can be used to quantify this accuracy as a function of data size. Furthermore, most of the methods described here can be extended to inexact marginals (e.g., see \cite{dudik2007maximum} for an approach that can be applied here).} Namely we assume we are given a vector $\aa$ defined by 
$\aa = \expect{\truep}{\ff(X,Y)}$.
Since $\aa$ does not uniquely specify a distribution $\truep$, we will be interested in the set of all distributions where the expected value of $\ff(X,Y)$ is $\aa$. Denote this set by $\pset(\aa)$, namely:
\be
\pset(\aa) = \left\{ q \in \Delta :  \expect{q}{\ff(X,Y)}=\aa \right\}
\label{eq:pset}
\ee
where $\Delta$ is the probability simplex of the appropriate dimension.

We shall specifically be interested in the case where the expected values correspond to simple marginals of the distribution $\truep$, such as those of a single feature and a label:
\[
\mu_i(x_i,y) = \sum_{\bar{x}_1,\ldots,\bar{x}_n: \bar{x}_i=x_i} \truep(\bar{x}_1,\ldots,\bar{x}_n,y).
\]
Similarly we may have access to the set of pairwise marginals $\mu_{ij}(x_i,x_j,y)$ for all $i,j\in E$, where the set $E$ corresponds to edges of a graph $G$ (see also \cite{EbanICML14}). When the label is multivariate we may also incorporate marginals of the form $\mu_{lk}(y_l,y_k)$, then $(l,k)\in{E}$ and we treat labels as part of the graph.

We denote the set of all such marginals by $\muv$. And, as in \eqref{eq:pset} we define $\pset(\muv)$ to be the set of distributions whose marginals are given by $\muv$. As we shall see later, the structure of the graph $G$ will have implications on the types of bounds we can derive. Specifically, if $G$ is tree shaped (i.e., has no cycles), tight bounds can be derived.

\subsection{The Robust Conditionals Problem}
Our approach is to reason about conditional distributions using only the fact that $\truep\in\linstatpoly{}$. Our key goal is to lower bound these
conditionals, since this will allow us to conclude that certain labels are highly likely in cases where the lower bound is large. We shall also 
be interested in upper and lower bounding joint probabilities, since these will play a key role in bounding the conditionals. 

Our goal is thus to solve the following optimization problems.
\comment{
, estimated from data, to constrain a family of possible joint distributions. This is a common approach in variational inference of models \cite{DBLP:journals/ftml/WainwrightJ08} and has also been proposed for other problems \cite{DBLP:conf/nips/RoughgardenK13}. Here we bound probabilities of events in the family of these joint distributions. For notational convenience we will now consider state spaces $\mathcal{X}=[k]^n$ over $n$ discrete variables, where a subset of them $h\subseteq{[n]}$ are hidden (these will play the role of a label, possibly a structured one).
Assume we know the exact marginal distributions $\mu_{ij}(x_i,x_j)$ for pairs of variables in the set $E\subseteq{[n]\times [n]}$ and stack these values to a real vector $\muv$. The variational principles we consider are constrained by the set $\linstatpoly{}$, the set of all probability distributions that attain marginals $\muv$:
\begin{align*}
\linstatpoly{} = \{\, p\in\Delta \mid \sum_{\zz: z_i,z_j=x_i,x_j}{p(\zz)} = \mu_{ij}(x_i,x_j) \quad \forall i,j\in{E},x_i,x_j \,\}.
\end{align*}
}
\begin{align}
\min_{p\in{\linstatpoly{}}}{p(\xx,\yy)},\max_{p\in{\linstatpoly{}}}{p(\xx,\yy)}, \min_{p\in{\linstatpoly{}}}{p\cond{\yy}{\xx}}.
\label{eq:rcp}
\end{align}
In all three problems, the constraint set is linear in $p$. However, note that $p$ is specified by an exponential number of variables (one per assignment $x_1,\ldots,x_n$) and thus it is not feasible to plug these constraints into an LP solver. In terms of objective, the min and max problems are linear, and the conditional
is fractional linear. In what follows we show how all three problems can be solved efficiently for tree shaped graphs.

\section{Related Work}
The problem of reasoning about a distribution based on its expected values has a long history, with many beautiful mathematical results. An early example is the classical Chebyshev inequality, which bounds the tail of a distribution given its first and second moments. This was significantly extended in the Chebyshev Markov Stieltjes inequality \cite{akhiezer1965classical}. More recently, various generalized Chebyshev inequalities have been developed \cite{bertsimas2005optimal,smith1995generalized,vandenberghe2007generalized}. A typical statement of these is that several moments are given, and one seeks the minimum measure of some set $S$ under any distribution that agrees with the moments. As \cite{bertsimas2005optimal} notes, most of these problems are NP hard, with isolated cases of tractability.  Such inequalities have been used to obtain minimax optimal linear classifiers in \cite{lanckriet2002robust}. The moment problems we consider here are very different from those considered previously, in terms of the finite support we require, our focus on bounding probabilities and conditional probabilities of assignments. 

The above approaches consider worst case bounds on probabilities of certain events for distributions in $\pset(\aa)$. A different approach is to pick a particular distribution in $\pset$ and use it as an approximation (or model) of $\truep$. The most common choice for such a distribution is the maximum entropy distribution in $\pset(\aa)$. Such log-linear models have found widespread use in statistics and machine learning. In particular, most graphical models can be viewed as maximum entropy distributions (e.g., see \cite{koller2009probabilistic,Lafferty01conditional}). However, the probabilities given by the maximum entropy model cannot be related to the true probabilities in any sense (e.g., upper or lower bound). This is where our approach markedly differs from entropy based assumptions.   Another approach to reducing modeling assumptions is robust optimization, where data and certain model parameters are assumed not be known precisely, and optimality is sought in a worst case adversarial setting. Such an approach has been applied to machine learning in various settings (e.g, see \cite{XuMannor09,Livni12}), establishing close links to regularization. None of these approaches considers bounding probabilities as is our focus here.

Finally, another elegant moment approach is that based on kernel mean embedding \cite{smola2007hilbert,song2013kernel}. In this approach, one maps a distribution into a set of expected values of a set of functions (possibly infinite). The key observation is that this {\em mean embedding} lies in an RKHS, and hence many operations, such as computing distribution similarity and covariances can be done implicitly. Most of the applications of this idea assume that the set of functions is rich enough to fully specify the distribution (i.e., {\em characteristic kernels} \cite{Sriperumbudur}). The focus is thus different from ours, where moments are not assumed to be fully informative, and the set $\pset(\aa)$ contains many possible distributions. It would however be interesting to study possible uses of RKHS in our setting.   

\section{Calculating Robust Conditional Probabilities}
The optimization problems in \eqref{eq:rcp} are linear programs (LP)
and fractional LPs, where the number of variables scales exponentially
with $n$. Yet, as we show in this section and \secref{sec:closed_comb}, it turns out that in many non-trivial
cases, they can be efficiently solved. Our focus below is on the case where the pairwise marginals correspond to a set $E$ that forms a tree structured graph. The tree structure assumption is common in literature on Graphical Models, only
here we do not make an inductive assumption on the generating
distribution (i.e., we make none of the conditional independence
assumptions that are implied by tree-structured graphical models). In
the following sections we study solutions of robust conditional probabilities  under the tree assumption. We will also discuss
some extensions to the cyclic case.  Finally, note that although the derivations here are for pairwise marginals, these can be extended to the non-pairwise case by considering clique-trees \citep[e.g., see][]{wainwright2008graphical}. Pairs are used here to allow a clearer presentation. 

In what follows, we first show that the conditional lower bound has a simple structure as stated in Theorem \ref{thm:struct_cond}. This result does not immediately suggest an efficient algorithm since its denominator includes an exponentially sized LP. Next, in \secref{sec:minmax_probs} we show how this LP can be reduced to a polynomially sized one, resulting in an efficient algorithm for the lower bound. Finally, in \secref{sec:closed_comb} we show that in certain cases there is no need to use a general purpose LP solver and the problem can be solved either in closed form or via combinatorial algorithms. Detailed proofs are provided in the appendix.

\subsection{From Conditional Probabilities To Maximum Probabilities with Exclusion}
The main result of this section will reduce calculation of the robust conditional probability for $p(\yy\mid\xx)$, to one of maximizing the probability of all labels other than $\yy$. This reduction by itself will not allow for efficient calculation of the desired conditional probabilities, as the new problem is also a large LP that needs to be solved. Still the result will take us one step further towards a solution, as it reveals the probability mass a minimizing distribution $p$ will assign to $\xx,\yy$.

This part of the solution is related to a result from \cite{fromer2009lp}, where the authors derive the solution of $\min_{p\in{\linstatpoly{}}}{p(\xx,\yy)}$. They prove that under the tree assumption this problem has a simple closed form solution, given by the functional $I(\xx,y \,; \,\muv)$:
\begin{align} \label{eq:Ixmu}
I(\xx,y \,;\, \muv) = \left[ \sum_{i}{(1-d_i)\mu_i(x_i,y)} + \sum_{ij\in{E}}{\mu_{ij}(x_i,x_j,y)} \right]_+.
\end{align}
Here $ \left[ \cdot \right]_+$ denotes the ReLU function $[z]_+  = \max \{z,0\}$ and $d_i$ is the degree of node $i$ in $G$. The above expression is suitable in case of a single label, it extends naturally to the multivariate case when we consider labels as part of the graph.

It turns out that robust conditional probabilities will assign the event $\xx,\yy$ its minimal possible probability as given in \eqref{eq:Ixmu}. Moreover, it will assign all other labels their maximum possible probability. This is indeed a behaviour that may be expected from a robust bound, we formalize it in the main result for this part:
\begin{theorem} \label{thm:struct_cond}
Let $\muv$ be a vector of tree-structured pairwise marginals, then
\begin{align} \label{eq:cond_struct}
\min_{p\in{\linstatpoly{}}}p\cond{\yy}{\xx}= \frac{I(\xx,\yy \,; \muv)}{I(\xx,\yy \,; \muv) + \max_{p\in{\linstatpoly{}}}{\sum_{\bar{\yy}\neq \yy}{p(\xx,\bar{\yy})}}}.
\end{align}
\end{theorem}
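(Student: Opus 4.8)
The plan is to rewrite the conditional as a two-variable fraction and argue separately that it is bounded below by the claimed quantity and that the bound is attained. Throughout write $a(p) = p(\xx,\yy)$ and $b(p) = \sum_{\bar{\yy}\neq\yy}p(\xx,\bar{\yy})$, so that for every $p\in\linstatpoly{}$ with $p(\xx)>0$ we have $p\cond{\yy}{\xx} = a(p)/(a(p)+b(p))$. Two facts are available: by the result of \cite{fromer2009lp} quoted in \eqref{eq:Ixmu}, $\min_{p\in\linstatpoly{}} a(p) = I(\xx,\yy\,;\muv)$, and by definition $\max_{p\in\linstatpoly{}} b(p)$ is exactly the denominator term appearing in \eqref{eq:cond_struct}; call these optimal values $I$ and $M$.

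For the lower bound I would exploit monotonicity of $g(a,b)=a/(a+b)$, which is increasing in $a$ and decreasing in $b$ on the positive orthant. Hence for any feasible $p$, using $a(p)\ge I$ and then $b(p)\le M$,
\be
\frac{a(p)}{a(p)+b(p)} \;\ge\; \frac{I}{I+b(p)} \;\ge\; \frac{I}{I+M},
\ee
which already gives $\min_{p}p\cond{\yy}{\xx}\ge \frac{I}{I+M}$, i.e.\ the ``$\ge$'' half of the theorem. This step is routine once the two scalar optimization values are in hand.

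The substance of the theorem is the matching upper bound, i.e.\ exhibiting a single feasible $p^*$ that attains $a(p^*)=I$ and $b(p^*)=M$ simultaneously; then $g(a(p^*),b(p^*))=\frac{I}{I+M}$ and the two bounds meet. This is not automatic: the image $R=\{(a(p),b(p)):p\in\linstatpoly{}\}$ is a convex polygon whose minimal-$a$ vertex and maximal-$b$ vertex need not coincide, so one must show the ``ideal corner'' $(I,M)$ actually lies in $R$. I would prove this by a mass-transport argument leaning on the tree structure. Starting from any maximizer $p_2$ of $b$, the configurations contributing to $b$ (those of the form $(\xx,\bar{\yy})$ with $\bar{\yy}\neq\yy$) are disjoint from the single configuration $(\xx,\yy)$ contributing to $a$. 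One then reroutes the excess mass sitting on $(\xx,\yy)$ onto configurations that disagree with $\xx$ in at least one feature coordinate, pushing $a$ down to its floor $I$ while leaving every pairwise marginal in $\muv$ unchanged and never altering the mass on any $(\xx,\bar{\yy})$, so that $b$ stays equal to $M$. Concretely this is a flow/cancellation argument along the tree $G$, and the closed-form minimizer underlying \eqref{eq:Ixmu} should supply exactly the required redistribution.

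The main obstacle is precisely this simultaneity/achievability step: showing that constraining $p$ to be a minimizer of $a$ does not force $b$ strictly below $M$ (equivalently, that the corner $(I,M)$ is feasible). An alternative route is to linearize the fractional program: the optimal conditional value $c^*$ is the root of $h(c)=\min_{p\in\linstatpoly{}}\big[(1-c)a(p)-c\,b(p)\big]$, a concave function of $c\in[0,1]$. For $c\in[0,1]$ the coefficients $1-c\ge 0$ and $-c\le 0$ mean the inner LP wants $a$ small and $b$ large, so if one can show this linear objective decouples over the tree-marginal polytope, so that the same $p$ minimizes $a$ and maximizes $b$ against these signs, then $h(c)=(1-c)I-cM$, and solving $h(c^*)=0$ yields $c^*=\frac{I}{I+M}$. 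Either way the crux is the decoupling afforded by the tree structure; the rest is bookkeeping.
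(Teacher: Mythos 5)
Your lower-bound half is correct and is indeed the easy direction: monotonicity of $a/(a+b)$ together with $\min_p a(p)=I$ and $\max_p b(p)=M$ immediately gives $\min_p p\cond{\yy}{\xx}\ge I/(I+M)$. You have also correctly diagnosed that the entire content of the theorem is the achievability of the corner $(I,M)$ --- equivalently (when $I,M>0$), the existence of a single $p\in\linstatpoly{}$ attaining $p(\xx,\yy)=I$ and $\sum_{\bar{\yy}\neq\yy}p(\xx,\bar{\yy})=M$ simultaneously. But that is exactly the step you do not prove. The ``mass-transport'' sketch asserts that excess mass on $(\xx,\yy)$ can be rerouted to assignments disagreeing with $\xx$ ``while leaving every pairwise marginal in $\muv$ unchanged and never altering the mass on any $(\xx,\bar{\yy})$''; yet moving mass off $(\xx,\yy)$ necessarily perturbs every marginal $\mu_i(x_i,\yy)$ and $\mu_{ij}(x_i,x_j,\yy)$, so compensating flows are required, and it is not at all clear they can avoid the fibers $(\xx,\bar{\yy})$ whose total mass must stay pinned at its maximum $M$. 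Likewise, ``the closed-form minimizer underlying \eqref{eq:Ixmu} should supply exactly the required redistribution'' is an unsupported hope: the minimizer of $p(\xx,\yy)$ from \cite{fromer2009lp} has no reason to also maximize $\sum_{\bar{\yy}\neq\yy}p(\xx,\bar{\yy})$. Your alternative Dinkelbach route has the same hole: the inequality $h(c)\ge(1-c)I-cM$ only reproves the lower bound, and the claimed ``decoupling'' of the two objectives over $\linstatpoly{}$ is again precisely the simultaneity statement you set out to establish.

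The paper's proof is devoted almost entirely to closing this gap, and it does so by a much heavier route: a Charnes--Cooper transformation of the fractional program, dualization, replacement of the exponentially many dual constraints by polynomial-size second-best-MAP LP relaxations that are tight on trees (Lemma~\ref{lem:cond_compactness_helper}), a complementary-slackness argument showing that the constraint $I(\xx\,;\,\muvt)\le 0$ may be tightened to an equality together with $\muvt\le\muv-\mathbf{I}_{\xx}$ whenever the max over $U$ strictly exceeds the max over $U\setminus\xx$ (Lemma~\ref{lem:cond_closed_form_helper}), and finally an explicit feasible point of the compact dual attaining the value $I/(I+M)$, with a separate treatment of the degenerate case $I(\xx,\yy\,;\,\muv)= 0$, which your write-up also leaves unaddressed. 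None of that bookkeeping is optional; it is where the tree structure actually enters. As it stands your proposal establishes only one inequality of \eqref{eq:cond_struct}.
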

The proof of this theorem is rather technical and we leave it for the appendix. 

We note that the above result also applies to the ``structured-prediction'' setting where $\yy$ is multivariate and we also assume knowledge of marginals $\mu(y_i,y_j)$. In this case, the expression for $I(\xx,\yy \,;\, \muv)$ will also include edges between $y_i$ variables, and incorporate their degrees in the graph.


The important implication of Theorem \ref{thm:struct_cond} is that it reduces the minimum conditional problem to that of probability maximization with an assignment exclusion. Namely: 
\begin{align} \label{eq:max_exclusion}
\max_{p\in{\linstatpoly{}}}{\sum_{\bar{\yy}\neq \yy}{p(\xx,\bar{\yy})}}.
\end{align}
Although this is still a problem with an exponential number of variables, we show in the next section that it can be solved efficiently.

\subsection{Minimizing and Maximizing Probabilities \label{sec:minmax_probs}}
To provide an efficient solution for \eqref{eq:max_exclusion}, we turn to a class of joint probability bounding problems.
Assume we constrain each variable $X_i$ and $Y_j$ to a subset $\bar{X}_i, \bar{Y}_j$ of its domain and would like to reason about the probability of this constrained set of joint assignments:
\begin{align} \label{eq:U_def}
U = \left\{ \xx,\yy \mid x_i\in{\bar{X}_i}, y_j\in{\bar{Y}_j} \quad \forall i\in{[n]}, j\in{[r]} \right\}.
\end{align}

Under this setting, an efficient algorithm for
\[\max_{p\in{\linstatpoly{}}}{\sum_{\uu\in{U\setminus (\xx,\yy)}}{p(\uu)}},\]
provides one to \eqref{eq:max_exclusion} and by the results of last section, also for robust conditional probabilities. To see this is indeed the case, assume we are given an assignment $(\xx,\yy)$. Then setting $\bar{X}_i=\{x_i\}$ for all features and $\bar{Y}_j=\{1,\ldots,|Y_j|\}$ for labels (i.e. $U$ does not restrict labels), gives exactly \eqref{eq:max_exclusion}.

To derive the algorithm, we will find a compact representation of the LP, with a polynomial number of variables and constraints. The result is obtained by using tools from the literature on Graphical Models. It shows how to formulate probability maximization problems over $U$ as problems constrained by the local marginal polytope \cite{wainwright2008graphical}. Its definition in our setting slightly deviates from its standard definition, as it does not require that probabilities sum up to $1$ \footnote{We omit the labels $Y_1,\ldots,Y_r$ from this definition for notational convenience. Formally, the consistency constraints are also enforced for edges with nodes that correspond to labels.}:
\begin{definition}
The set of locally consistent pseudo marginals over $U$ is defined as:
\begin{align*}
\lclmargpoly(U) = \{ \muvt \mid \sum_{x_i\in{\bar{X}_i}}{\mut_{ij}(x_i,x_j)} = \mut_j(x_j) \quad \forall (i,j)\in{E}, x_j\in{\bar{X}_j}   \}.
\end{align*}
The partition function of $\muvt$, $Z(\muvt)$, is given by $\sum_{x_i\in{\bar{X}_i}}{\mut_i(x_i)}$.
\end{definition}
Our observation then is that \eqref{eq:max_exclusion} can be folded into a problem with polynomially many variables and constraints, by simply maximizing the partition function over $\lclmargpoly(U)$.
\begin{theorem} \label{thm:minmaxprobs}
Let $U$ be a universe of assignments as defined in \eqref{eq:U_def}, $\xx\in{U}$ and $\muv$ a vector of tree-structured pairwise marginals, then the values of the following problems:
\begin{align*}
\max_{p\in{\linstatpoly{}}}{\sum_{\uu\in{U}}p(\uu)}, \max_{p\in{\linstatpoly{}}}{\sum_{\uu\in{U\setminus (\xx,\yy)}}{p(\uu)}},
\end{align*}
are equal (respectively) to:
\begin{align}
\max_{\muvt\in{\lclmargpoly(U)},\muvt\leq\muv}{Z(\muvt)}, \max_{\substack{\muvt\in{\lclmargpoly(U)},\muvt\leq\muv \\ I(\xx,\yy \,;\, \muvt) \leq 0}}{Z(\muvt)}.
\label{eq:max_compact_lps}
\end{align}
The LPs in \eqref{eq:max_compact_lps} involve a polynomial number of constraints and variables and can thus be solved efficiently. 
\end{theorem}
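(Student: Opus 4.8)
The plan is to prove each of the two equalities by establishing two matching inequalities, passing back and forth between a distribution $p$ on the full (exponential) space and a locally consistent pseudomarginal $\muvt$ on $U$. The bridge in both directions is the \emph{sub-marginal map}: given any $p\in\linstatpoly{}$, define $\muvt$ by restricting $p$ to $U$ and reading off its pairwise and singleton marginals, $\mut_{ij}(x_i,x_j)=\sum_{\uu\in U:\,u_i=x_i,u_j=x_j}p(\uu)$ (and zero outside $\bar X_i\times\bar X_j$). I would first record three elementary facts: (i) $\muvt\in\lclmargpoly(U)$, since summing the restricted pairwise mass over $x_i\in\bar X_i$ reproduces the restricted singleton mass; (ii) $\muvt\le\muv$ coordinatewise, because $U$ is a subset of the domain and marginalizing over fewer assignments cannot exceed the full marginal; and (iii) $Z(\muvt)=\sum_{x_i\in\bar X_i}\mut_i(x_i)=\sum_{\uu\in U}p(\uu)$, so the partition function equals the objective. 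Facts (i)--(iii) immediately give the ``$\le$'' direction of the first equality: every feasible $p$ yields a feasible $\muvt$ with equal value.

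For the ``$\ge$'' direction I would reverse the construction. Given feasible $\muvt$ (locally consistent on $U$, with $\muvt\le\muv$), the tree structure lets me realize it: since on a tree local consistency equals global consistency, $\muvt/Z(\muvt)$ is the marginal vector of a tree-structured distribution on the product set $U$, and scaling by $Z(\muvt)$ produces a subprobability $p_U$ supported on $U$ with pairwise marginals $\muvt$ and mass $Z(\muvt)$. The residual marginals $\bar{\muv}:=\muv-\muvt\ge0$ are again nonnegative and, by a short computation, locally consistent with total mass $1-Z(\muvt)$; realizing $\bar{\muv}$ by a second tree reparametrization gives a subprobability $\bar p$ with marginals $\bar{\muv}$. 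Then $p:=p_U+\bar p$ is nonnegative, sums to $1$, and has full marginals $\muvt+\bar{\muv}=\muv$, so $p\in\linstatpoly{}$, and $\sum_{\uu\in U}p(\uu)\ge\sum_{\uu\in U}p_U(\uu)=Z(\muvt)$. Maximizing over $\muvt$ closes the first equality. (Any excess mass $\bar p$ places on $U$ is harmless: it can only raise the objective, and is in fact ruled out at the optimum by the already-proven ``$\le$'' inequality.)

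The second equality follows the same template but couples the exclusion of $(\xx,\yy)$ to the constraint $I(\xx,\yy;\muvt)\le0$ through the closed form of \cite{fromer2009lp} in \eqref{eq:Ixmu}. For ``$\le$'', given feasible $p$ I apply the sub-marginal map to the measure $p$ restricted to $U\setminus(\xx,\yy)$, obtaining $\muvt'$ with $Z(\muvt')=\sum_{\uu\in U\setminus(\xx,\yy)}p(\uu)$; local consistency and $\muvt'\le\muv$ hold as before. Crucially, $\muvt'$ is the marginal of a measure placing zero mass on $(\xx,\yy)$, so by \eqref{eq:Ixmu} the minimal attainable mass $I(\xx,\yy;\muvt')$ (positively homogeneous, being a ReLU of a linear form) must be $\le0$, making $\muvt'$ feasible for the constrained LP. For ``$\ge$'', given feasible $\muvt$ with $I(\xx,\yy;\muvt)\le0$, the same formula guarantees a distribution on $U$ with marginals $\muvt$ and zero mass at $(\xx,\yy)$; taking $p_U$ to be this (scaled) distribution and adding the residual $\bar p$ as above yields $p\in\linstatpoly{}$ with $\sum_{\uu\in U\setminus(\xx,\yy)}p(\uu)\ge Z(\muvt)-p_U(\xx,\yy)=Z(\muvt)$. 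The polynomial-size claim is finally a counting remark: $\lclmargpoly(U)$ has one variable per restricted pairwise/singleton cell and one equation per edge-value, the bound $\muvt\le\muv$ is coordinatewise, and since $I=[\,\cdot\,]_+$ the constraint $I\le0$ reduces to the single linear inequality that its argument be nonpositive.

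I expect the main obstacle to be the ``$\ge$'' (realizability) direction: turning an abstract locally consistent pseudomarginal into a genuine distribution on the exponential space. This rests on the tree reparametrization identity and needs care with degenerate entries (zeros in $\muvt$ or $\bar{\muv}$, where factors $\mut_{ij}/(\mut_i\mut_j)$ are interpreted as zero) and with the total-mass bookkeeping of $p_U$ and $\bar p$. The second delicate point is verifying that the local consistency of $\bar{\muv}=\muv-\muvt$ is inherited from that of $\muv$ and $\muvt$, and that invoking \eqref{eq:Ixmu} on the subdomain $U$ (same tree, same degrees $d_i$) is legitimate, including the positive-homogeneity argument that lets the scaling between $\muvt$ and $\muvt/Z(\muvt)$ pass through the $I$ functional.
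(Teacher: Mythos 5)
Your proposal is correct in substance and shares its skeleton with the paper's proof: the forward direction (restrict $p$ to $U$, read off sub-marginals, check local consistency, $\muvt\leq\muv$, and $Z(\muvt)=\sum_{\uu\in U}p(\uu)$; for the excluded version, verify $I(\xx,\yy;\muvt)\leq 0$ via the inclusion--exclusion/counting identity) and the tree realization of a feasible $\muvt$ on $U$ (the product formula, and the construction of \cite{fromer2009lp} to pin $p(\xx,\yy)$ at $[\tilde I]_+$ for the excluded case) are essentially identical to what the paper does. Where you genuinely diverge is in how the backward direction is reconnected to the original problem over $\linstatpoly{}$, whose constraints are \emph{equalities} on full distributions. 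The paper first proves (\cororef{cor:inequality_LP}, itself a consequence of the Set-Cover lemma via LP duality and the min-reparameterization property of tree-decomposable functions) that the problem is equivalent to \eqref{eq:maxprob_reform}, an inequality-constrained program over subprobabilities supported on $U$; the realized $p_U$ is then directly feasible there and no extension is needed. You instead extend $p_U$ to a genuine element of $\linstatpoly{}$ by realizing the residual $\bar{\muv}=\muv-\muvt$ (nonnegative, locally consistent once $\muvt$ is padded with zeros outside $\bar X_i\times\bar X_j$, uniformly normalized to $1-Z(\muvt)$) as a second tree-structured measure and summing. Your route is more elementary and self-contained --- it never leaves the primal and avoids the duality machinery --- at the price of the degenerate-entry bookkeeping you already flag and of having to argue that extra mass $\bar p$ may deposit on $U$ only helps the $\geq$ inequality (which you do). The paper's route does more work up front but reuses \cororef{cor:inequality_LP} elsewhere (notably in the proof of Theorem~\ref{thm:struct_cond}), which is presumably why it is organized that way. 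The one point to make fully explicit in your write-up is that the constraint written as $I(\xx,\yy;\muvt)\leq 0$ must be read on the linear form $\tilde I$ without the ReLU, as you note at the end; with that reading both your forward argument (zero mass at $(\xx,\yy)$ forces $\tilde I\leq 0$) and your backward argument (attainability of $p_U(\xx,\yy)=[\tilde I]_+=0$) go through.
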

Proofs of this result can be obtained either by exploiting strong duality of LPs and the max-reparameterization property of functions that decompose over trees \cite{DBLP:journals/sac/WainwrightJW04, cowell2006probabilistic}, or by using the junction-tree theorem \cite{wainwright2008graphical}. In the appendix we provide a proof based on the latter.

\comment{
Another interesting result we found while exploring these problems is on minimisation of probabilities under this setting. The results of \cite{fromer2009lp} are naturally generalized to this scenario, using a definition of \eqref{eq:Ixmu} as a set function:
\begin{definition}
The set of all pairs of variable indices and their allowed assignments is defined as
\begin{align*}
S = \left\{ (Z,z) ~|~ z\in{\bar{Z}}, Z\in{X,Y} \right\}.
\end{align*}
Additionally we define a set functional $I(\cdot\,;\,\muv): 2^S \rightarrow \mathbb{R}$,
\begin{align} \label{eq:I_highord}
I(\tilde{S}\,;\muv) = \sum_{(i,x_i)\in{\tilde{S}}}{(1-d_i)\mu_i(x_i)} + \sum_{\substack{ ij\in{E}, \\ (i,x_i),(j,x_j)\in{\tilde{S}}}}{\mu_{ij}(x_i,x_j)}.
\end{align}
For the empty set $I(\emptyset\,;\,\muv) = 0$.
\end{definition}
An efficient way to minimise the probability of the set $U$ under $\linstatpoly{}$ is then by maximising the above functional:
\begin{theorem} \label{thm:minmaxprobs}
Let $U$ be a universe of assignments as defined in \eqref{eq:U_def}, $\xx\in{U}$ and $\muv$ a vector of tree-structured pairwise marginals. Then it holds that \[ \min_{p\in{\linstatpoly{}}}{\sum_{\uu\in{U}}p(\uu)} = \max_{\tilde{S}\in{S}}{I(\tilde{S}; \muv)}.\]
Furthermore, $I(\tilde{S}; \muv)$ is a super-modular set function.
\end{theorem}
}
To conclude this section, we restate the main result: the robust conditional probability problem \eqref{eq:rcp}  can be solved in polynomial time by combining Theorems \ref{thm:struct_cond} and \ref{thm:minmaxprobs}.
As a by-product of this derivation we also presented efficient tools for bounding answers on a large class of probabilistic queries. While this is not the focus of the current paper, these tools may be a useful in probabilistic modelling, where we often combine estimates of low order marginals with assumptions on the data generating process. Bounds like the ones presented in this section give a quantitative estimate of the uncertainty that is induced by data and circumvented by our assumptions.

\ignore{
This is a generalization of \eqref{eq:Ixmu} for our setting, we will shortly see that it has similar properties in terms of minimum probabilities. The second definition is an adaptation of the local marginal polytope \cite{DBLP:journals/ftml/WainwrightJ08} to our setting, it slightly deviates from the standard definition as it does not require that probabilities sum up to $1$.
\begin{definition}
The set of locally consistent pseudo marginals over $U$ is defined as:
\begin{align*}
\lclmargpoly(U) = \{ \muvt \mid \sum_{x_i\in{U_i}}{\mut_{ij}(x_i,x_j)} = \mut_j(x_j) \quad \forall (i,j)\in{E}, x_j\in{U_j}   \}.
\end{align*}
The partition function of $\muvt$, $Z(\muvt)$, is defined by $\sum_{x_i\in{U_i}}{\mut_i(x_i)}$.
\end{definition}
We are now in place to state the main result for this section, the following theorem gives tractable formulations to several problems of bounding joint probabilities. 
\begin{theorem} \label{thm:minmaxprobs}
Let $U$ be a universe of assignments as defined in \eqref{eq:U_def}, $\xx\in{U}$ and $\muv$ a vector of tree-structured pairwise marginals, then the values of the following problems:
\begin{align*}
\min_{p\in{\linstatpoly{}}}{\sum_{\uu\in{U}}p(\uu)},\max_{p\in{\linstatpoly{}}}{\sum_{\uu\in{U}}p(\uu)}, \max_{p\in{\linstatpoly{}}}{\sum_{\uu\in{U\setminus \xx}}{p(\uu)}},
\end{align*}
are equal (respectively) to:
\begin{align*}
\max_{\tilde{S}\in{S}}{I(\tilde{S}; \muv)},\max_{\muvt\in{\lclmargpoly(U)},\muvt\leq\muv}{Z(\muvt)}, \max_{\substack{\muvt\in{\lclmargpoly(U)},\muvt\leq\muv \\ I(\xx\,;\, \muvt) \leq 0}}{Z(\muvt)}.
\end{align*}
Furthermore, $I(\tilde{S}; \muv)$ is super-modular and LPs for the maximisation problems have number of variables and constraints polynomial in $n,k$.
\end{theorem}
Proofs for all results are given in the supplementary material, they are mainly based on strong duality of LPs and reparameterisations of functions that decompose over a tree structure in terms of their max-marginals \cite{DBLP:journals/sac/WainwrightJW04}.
}

\section{Closed Form Solutions and Combinatorial Algorithms \label{sec:closed_comb}}
The results of the previous section imply that the minimum conditional can be found by solving a poly-sized LP.  Although this results in polynomial runtime, it is interesting to improve as much as possible on the complexity of this calculation.
One reason is that application of the bounds might require solving them repeatedly within some larger learning probelm. For instance, in classification tasks it may be necessary to solve \eqref{eq:cond_struct} for each sample in the dataset. An even more demanding procedure will come up in our experimental evaluation, where we learn features that result in high confidence under our bounds. There, we need to solve \eqref{eq:cond_struct} over mini-batches of training data only to calculate a gradient at each training iteration. Since using an LP solver in these scenarios is impractical, we next derive more efficient solutions to some special cases of \eqref{eq:cond_struct}.

\subsection{Closed Form for Multiclass Problems}
The multiclass setting is a special case of \eqref{eq:cond_struct} when $y$ is a single label variable (e.g., a digit label in mnist with values $y\in \{0,\ldots,9\}$).
In this case the problem in \eqref{eq:rcp} is:
$\min_{p\in{\linstatpoly{}}}{p\cond{y}{\xx}}$.
The solution of course depends on the type of marginals provided in $\linstatpoly{}$. Here we will assume that we have access to joint marginals of the label $y$ and pairs of features $x_i,x_j$ corresponding to edges $ij\in E$ of a graph $G$. We note that we can obtain similar results for the cases where some additional ``unlabeled'' statistics $\mu_{ij}(x_i,x_j)$ are known. 


It turns out that in both cases \eqref{eq:max_exclusion} has a simple solution. Here we write it for the case without unlabeled statistics.
\begin{lemma} \label{lem:multiclass_cond}
Let $\xx\in{\mathcal{X}}$ and $\muv$ a vector of tree-structured pairwise marginals, then
\begin{align} \label{eq:multiclass_cond}
\min_{p\in{\linstatpoly{}}}{p \cond{y}{\xx}}=\frac{I(\xx,y \,;\, \muv)}{I(\xx,y \,;\, \muv) + \sum_{\bar{y}\neq y}{\min_{ij}{\mu_{ij}(x_i,x_j,\bar{y})}}}.
\end{align}
\end{lemma}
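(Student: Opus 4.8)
The plan is to invoke \thmref{thm:struct_cond} to reduce the claim to a closed-form evaluation of the exclusion term, and then to compute that term by a matching upper bound and an explicit construction. Since $y$ is univariate, \thmref{thm:struct_cond} already expresses the minimum conditional as $I(\xx,y\,;\,\muv)$ divided by $I(\xx,y\,;\,\muv)+\max_{p\in\linstatpoly{}}\sum_{\bar y\neq y}p(\xx,\bar y)$, so the entire content of the lemma is the identity
\[
\max_{p\in\linstatpoly{}}\sum_{\bar y\neq y}p(\xx,\bar y)=\sum_{\bar y\neq y}\min_{ij}\mu_{ij}(x_i,x_j,\bar y).
\]
The structural fact I would exploit is that, in the ``without unlabeled statistics'' regime, no constraint ties together different values of $y$: every constraint lives in a single label slice $\mu_{ij}(\cdot,\cdot,v)$. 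This decouples the maximization across label values and is what makes a clean closed form possible.

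First I would prove the easy ``$\leq$'' direction. Fix $p\in\linstatpoly{}$, a label $\bar y\neq y$, and an edge $ij\in E$. Since the event $\{X=\xx,Y=\bar y\}$ is contained in $\{X_i=x_i,X_j=x_j,Y=\bar y\}$, marginalizing $p$ over all features other than $i,j$ gives $p(\xx,\bar y)\leq\mu_{ij}(x_i,x_j,\bar y)$. Taking the minimum over edges and summing over $\bar y\neq y$ yields $\sum_{\bar y\neq y}p(\xx,\bar y)\leq\sum_{\bar y\neq y}\min_{ij}\mu_{ij}(x_i,x_j,\bar y)$, so the maximum is at most the right-hand side.

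The main work, and the step I expect to be the obstacle, is tightness: producing one $p\in\linstatpoly{}$ that attains the bound for all $\bar y\neq y$ simultaneously. Because the slices decouple, I would build $p$ one slice at a time, constructing each $q_v(\cdot)=p(\cdot,v)$ independently. For $\bar y\neq y$ set $m_{\bar y}=\min_{ij}\mu_{ij}(x_i,x_j,\bar y)$, place a point mass $m_{\bar y}$ at $\xx$, and seek a nonnegative completion of the slice with edge marginals equal to the residuals $\mu_{ij}(a,b,\bar y)-m_{\bar y}\,\mathbf{1}\{a=x_i,b=x_j\}$. I would check that these residual edge and node marginals are nonnegative (using $m_{\bar y}\leq\mu_{ij}(x_i,x_j,\bar y)\leq\mu_i(x_i,\bar y)$) and locally consistent on $G$, so that the junction-tree theorem supplies a realizing distribution on the tree; moreover the edge attaining the minimum has zero residual at its $\xx$-coordinates, forcing every completion to place no further mass at $\xx$, hence $q_{\bar y}(\xx)=m_{\bar y}$ exactly. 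Choosing the $v=y$ slice to be any valid completion and summing the slices gives a legitimate $p\in\linstatpoly{}$ with $\sum_{\bar y\neq y}p(\xx,\bar y)=\sum_{\bar y\neq y}m_{\bar y}$, which together with the upper bound proves the identity and, via \thmref{thm:struct_cond}, establishes \eqref{eq:multiclass_cond}.

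As an alternative to the explicit construction I could apply \thmref{thm:minmaxprobs} with $U=\{\xx\}\times\{\text{all labels}\}$, so that excluding $(\xx,y)$ reproduces \eqref{eq:max_exclusion}, and then solve the resulting local-marginal-polytope LP in closed form; I expect the per-slice decoupling to factor that LP's optimum into the same sum $\sum_{\bar y\neq y}m_{\bar y}$. Either way, the delicate point is the realizability check: confirming that committing mass $m_{\bar y}$ to the single assignment $\xx$ in slice $\bar y$ leaves residual marginals that remain nonnegative, tree-consistent, and hence realizable, which is precisely where the tree assumption on $G$ enters.
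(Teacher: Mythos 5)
Your proposal is correct, and its skeleton matches the paper's: both arguments rest on the observation that the constraints of $\linstatpoly{}$ decouple across label values (each constraint involves only one slice $\mu_{ij}(\cdot,\cdot,\bar y)$), so the fractional objective is optimized by independently minimizing $p(\xx,y)$ and maximizing each $p(\xx,\bar y)$, and everything reduces to the identity $\max_{p\in\linstatpoly{}}p(\xx,\bar y)=\min_{ij}\mu_{ij}(x_i,x_j,\bar y)$. Where you genuinely diverge is in how that identity is established. The paper works on the dual side: it writes the dual LP $\min\lambdav\cdot\muv$ with $\lambda(\xx,\bar y)\ge 1$ and $\lambda\ge 0$ elsewhere, uses the min-reparameterization of tree-decomposable functions to show every feasible dual point has objective at least $\min_{ij}\mu_{ij}(x_i,x_j,\bar y)$, and exhibits a matching feasible point (a single indicator on the minimizing edge). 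You instead work on the primal side: the upper bound is the trivial marginalization inequality (the primal counterpart of the paper's indicator dual point), and the matching lower bound is an explicit distribution built by committing mass $m_{\bar y}$ to $\xx$ and realizing the residual pseudo-marginals, which you correctly verify are nonnegative and locally consistent, via tree realizability. Both routes use the tree assumption at exactly one place --- the paper through the reparameterization property, you through the junction-tree realizability of locally consistent marginals --- and your construction is arguably more elementary and self-contained, at the cost of the (routine) realizability check; the paper's dual argument avoids constructing any distribution but leans on the reparameterization machinery it reuses elsewhere. Your additional reliance on \thmref{thm:struct_cond} is sound but heavier than needed, since the decoupling you already identified yields the fractional-programming structure directly, which is how the paper proceeds.
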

This lemma is based on a result that states $\max_{p\in{\linstatpoly{}}}{p(\xx,\bar{y})} = \min_{ij}{\mu_{ij}(x_i,x_j,\bar{y})}$, it can either be proved by analyzing results in \thmref{thm:minmaxprobs}, or with a duality based argument which is how we prove it in the appendix.
\subsection{Combinatorial Algorithms and Connection to Maximum Flow Problems \label{sec:maxflow}}
In some cases, fast algorithms for the optimization problem in \eqref{eq:max_exclusion} can be derived by exploiting a tight connection of our problems to the Max-Flow problem. The problems are also closely related to the weighted Set Cover problem. To observe the connection to the latter, consider an instance of Set-Cover defined as follows. The universe is all assignments $\xx$. Sets are defined for each $i,j,x_i,x_j$ and are denoted by $S_{ij,x_i,x_j}$. The set $S_{ij,x_i,x_j}$ contains all assignments $\bar{x}$ whose values at $i,j$ are $x_i,x_j$. Moreoever, the set $S_{ij,x_i,x_j}$ has weight $w(S_{ij,x_i,x_j}) = \mu_{ij}(x_i,x_j)$.
Note that the number of items in sets is exponential, but there is a polynomial amount of sets. Now assume we would like to use these sets to cover some set of assignments $U$ with the minimum possible weight.
It turns out that under the tree structure assumption, this problem is closely related to the problem of maximizing probabilities.
\begin{lemma} \label{lem:setcover}
Let $U$ be a set of assignments and $\muv$ a vector of tree-structured marginals. Then:
\begin{align} \label{eq:maxprob}
\max_{p\in{\linstatpoly{}}}{\sum_{\uu\in{U}}p(\uu)},
\end{align}
has the same value as the standard LP relaxation \cite{vazirani2013approximation} of the Set-Cover problem above.
\end{lemma}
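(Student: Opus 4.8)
The plan is to pass through the linear-programming dual of the Set-Cover relaxation rather than dualizing the probability problem directly. Write the maximization in \eqref{eq:maxprob} as an LP over the exponentially many variables $p(\bar\xx)$, subject to the equality constraints $\sum_{\bar\xx:\,\bar x_i=x_i,\bar x_j=x_j}p(\bar\xx)=\mu_{ij}(x_i,x_j)$ for every edge $ij\in E$ and every $x_i,x_j$, together with $p\ge 0$; its objective is $\sum_{\uu\in U}p(\uu)$. On the other side, the standard LP relaxation of the Set-Cover instance has variables $z_{ij,x_i,x_j}\ge 0$, minimizes $\sum_{ij,x_i,x_j}\mu_{ij}(x_i,x_j)\,z_{ij,x_i,x_j}$, and imposes the covering constraint $\sum_{ij\in E}z_{ij,u_i,u_j}\ge 1$ for each $\uu\in U$ (using that $\uu\in S_{ij,x_i,x_j}$ iff $u_i=x_i,u_j=x_j$). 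I would first form the LP dual of this relaxation, a fractional packing problem with a variable $y_\uu\ge 0$ for each $\uu\in U$, objective $\max\sum_{\uu\in U}y_\uu$, and constraints $\sum_{\uu\in U:\,u_i=x_i,u_j=x_j}y_\uu\le\mu_{ij}(x_i,x_j)$ for every edge and pair. Strong duality for this pair of LPs reduces the lemma to showing that \eqref{eq:maxprob} and this packing LP have the same optimal value.

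For the inequality ``probability $\le$ packing'', take any feasible $p$ and set $y_\uu=p(\uu)$ for $\uu\in U$. Splitting each marginal constraint into the contribution from $U$ and from its complement and discarding the nonnegative complementary mass turns every equality $=\mu_{ij}$ into the packing inequality $\le\mu_{ij}$, so $y$ is packing-feasible with the same objective value; hence the probability optimum is at most the packing optimum.

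The reverse inequality is the crux and is where the tree hypothesis enters. Given a packing-feasible $y$, I would reconstruct a full distribution attaining at least $\sum_\uu y_\uu$ on $U$ with exactly the prescribed marginals. Define the ``used'' mass $\nu_{ij}(x_i,x_j)=\sum_{\uu\in U:\,u_i=x_i,u_j=x_j}y_\uu$ and the residual $\rho_{ij}=\mu_{ij}-\nu_{ij}\ge 0$. The key observation is that the $\rho_{ij}$ are locally consistent: summing out $x_j$ gives $\sum_{\uu\in U:\,u_i=x_i}y_\uu$, which is independent of the incident edge $ij$, so the single-node residual marginals $\rho_i(x_i)=\mu_i(x_i)-\sum_{\uu\in U:\,u_i=x_i}y_\uu$ agree across all edges at node $i$. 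On a tree, such locally consistent pseudo-marginals are globally realizable by the tree factorization $p_\rho(\bar\xx)=\frac{\prod_{ij\in E}\rho_{ij}(\bar x_i,\bar x_j)}{\prod_i\rho_i(\bar x_i)^{d_i-1}}$, which is nonnegative and has pairwise marginals exactly $\rho_{ij}$. Adding to $p_\rho$ the point masses $y_\uu$ on $U$ yields a nonnegative $p$ whose marginals are $\nu_{ij}+\rho_{ij}=\mu_{ij}$, i.e. $p\in\linstatpoly{}$, and whose mass on $U$ is at least $\sum_\uu y_\uu$. Thus the probability optimum is at least the packing optimum, completing the two-sided bound.

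The main obstacle is exactly this realizability step: it is false for general graphs, and the argument relies on the tree factorization to turn the locally consistent residual marginals into an honest distribution. The only care needed is the degenerate case where some residual node-marginal $\rho_i(\bar x_i)$ vanishes, handled as usual by assigning probability zero to any assignment meeting such a node and restricting the product to positively-weighted coordinates; one also notes that both LPs are feasible (each $\uu$ lies in some set $S_{ij,u_i,u_j}$, $y=0$ is packing-feasible, and the tree factorization of $\muv$ itself lies in $\linstatpoly{}$), so strong duality applies throughout.
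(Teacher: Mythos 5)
Your proof is correct, but it travels in the opposite direction from the paper's. The paper dualizes the probability LP \eqref{eq:maxprob} directly, obtaining a covering-type program \eqref{eq:overall_dual} whose potentials $\lambdav$ are sign-unrestricted but decompose over the tree; the tree hypothesis is then spent on the min-reparameterization identity \eqref{eq:reparam}, which converts any feasible $\lambdav$ into an equivalent nonnegative $\deltav$ with the same objective $\lambdav\cdot\muv=\deltav\cdot\muv$, showing the sign constraints of the Set-Cover relaxation \eqref{eq:dual_tree_1} cost nothing. You instead dualize the Set-Cover relaxation into a fractional packing LP --- which is precisely the inequality-constrained reformulation the paper records as Corollary \ref{cor:inequality_LP} --- and close the gap on the primal side: given a packing-feasible $y$, you verify that the residuals $\rho_{ij}=\mu_{ij}-\nu_{ij}$ are nonnegative and locally consistent with a common normalization $1-\sum_{\uu}y_{\uu}$, realize them by the tree factorization, and superpose the point masses $y_{\uu}$ to land back in $\linstatpoly{}$. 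The two uses of the tree assumption (min-reparameterization versus exactness of the local polytope) are dual faces of the same junction-tree fact, so neither argument is more general in substance; both also work for arbitrary $U$, not just product sets. What your route buys is an explicit witness distribution attaining the packing value, which makes the equality of \eqref{eq:maxprob} with the inequality form \eqref{eq:maxprob_reform} constructive rather than a by-product of dual bookkeeping; what the paper's route buys is that the reparameterization machinery is reused verbatim in the proofs of Lemma \ref{lem:multiclass_cond} and Theorem \ref{thm:struct_cond}, so it amortizes better across the appendix. Your handling of the degenerate case $\rho_i(\bar x_i)=0$ and of feasibility/boundedness for strong duality is adequate as sketched; the only point worth writing out in a final version is the exponent bookkeeping showing that $\prod_{ij}\rho_{ij}/\prod_i\rho_i^{d_i-1}$ has total mass exactly $1-\sum_{\uu}y_{\uu}$ and marginals exactly $\rho_{ij}$, which uses $|E|=n-1$.
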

The connection to Set-Cover may not give a path to efficient algorithms, but it does illuminate some of the results presented earlier. It is simple to verify that $\min_{ij}{\mu_{ij}(x_i,x_j,\bar{y})}$ is a weight of a cover of $\xx,\bar{y}$, while \eqref{eq:Ixmu} equals one minus the weight of a set that covers all assignments but $\xx,\yy$.
A connection that we may exploit to obtain more efficient algorithms is to Max-Flow. When the graph defined by $E$ is a chain, we show in the appendix that the value of \eqref{eq:maxprob} can be found by solving a flow problem on a simple network. We note that using the same construction, \eqref{eq:max_exclusion} turns out to be Max Flow under a budget constraint \cite{DBLP:journals/networks/AhujaO95}. This may prove very beneficial for our goals, as it allows for efficient calculation of the robust conditionals we are interested in. Our conjecture is that this connection goes beyond chain graphs, but leave this for exploration in future work. The proofs for results in this section may also be found in the appendix.

\section{Experiments}
To evaluate the utility of our bounds, we consider their use in settings of semi-supervised deep learning and structured prediction. For the bounds to be useful, the marginal distributions need to be sufficiently informative. In some
datasets, the raw features already provide such information, as we show in \secref{sec:struct_pred}. In other cases, such as images, a single raw feature (i.e., a pixel) does not provide sufficient information about the label. These cases 
are addressed in  \secref{sec:semisupevised} where we show how to learn new features which {\em do} result in meaningful bounds. Using deep networks to learn these features turns out to be an effective method for semi-supervised settings, reaching results close to those demonstrated by Variational Autoencoders \cite{DBLP:conf/nips/KingmaMRW14}. It would be interesting to use such feature learning methods for structured prediction too; however this requires incorporation of
the max-flow algorithm into the optimization loop, and we defer this to future work.


\subsection{Deep Semi-Supervised Learning \label{sec:semisupevised}}
Here we describe how our bounds can be used for semi-supervised learning. We learn a neural network whose last layer serve as the features $Z_i$. The marginals of these with the label $Y$ are used in our bounds (in the text we refer to these as $X_i$. Here we switch to $Z_i$ since $X_i$ are understood as the raw features of the problem. e.g., the pixel values in the image).  The features $Z_i$ will not be discrete since they are an output of a neural net. However, we will use a sigmoid activation for the last layer, so that the $Z_i$ values are bounded between $0$ and $1$. For now, let us consider the $Z_i$ as actual discrete variables with values $\{0,1\}$, and we will later explain how to overcome their non-discrete values. Given an input $\xx$, we can calculate features $\zz$, and then calculate a set of bounds for $p(y|\zz)$ for each value of $y$.  Denote this bound by $\tilde{p}_y$. Then the bound is used in two ways, depending on whether $\xx$ has a label or not. 
If $\xx$ has a label $y$, we add a standard cross-entropy term where $\tilde{p}_y$ are the logits. This pushes $\tilde{p}_y$ towards values that are maximized in the correct label. If $\xx$ is unlabeled, we want to maximize the confidence of the prediction and thus add the entropy of the distribution $q_y \propto  \tilde{p}_y$ to the objective, scaled by a regularization coefficient. This prefers solutions where $\tilde{p}_y$ is focused on one assignment. It is related to min-entropy regularization \cite{grandvalet2005semi}, but the entropy is of  a distribution induced by our bounds. Finally, for classification we use the $\arg\max$ of the distribution $\tilde{p}_y$. Namely, we do not need to learn a softmax layer as is usually done. 

The architecture used for mapping the input $\xx$ (i.e., the image) into $\zz$ is a standard multilayer perceptron (MLP), with fully connected layers, a ReLU activation at each layer, except a sigmoid in the last one. In our experiments we used hidden layers of sizes $1000,500,50$ (so $\zz$ is $50$ dimensional). We also use batch normalization and add noise in hidden layers as described in \cite{DBLP:conf/nips/RasmusBHVR15} (however we do not use any component of their unsupervised cost function).
To address the fact that $Z_i$ is not discrete, we use the natural smooth counterparts of the discrete operations. Marginals are calculated by considering $Z_i$ as an indicator variable (e.g., the probability $p(Z_i=1)$ would just be the average of the $Z_i$ values). The min probability bound is calculated as follows:
\be
\tilde{p}_y = \mathrm{softmax}_y(\frac{\bar{I}(\zz,y \,;\, \bar{\muv})}{\bar{I}(\zz,y \,;\, \bar{\muv}) + \sum_{\bar{y}\neq y}{\min_{ij}{\bar{\mu}_{ij}(z_i,z_j,\bar{y})}}})  ~,
\ee

where $\bar{I},\bar{\muv}$ are again the smoothed versions of $I,\muv$.
\comment{
The last hidden layer approximates binary features $z_1,\ldots,z_{m}$, which we train to induce high robust conditional probability.
To this end we approximate the robust conditional probability of $y$ given $\zz$ for each $y$, using the bounds we derived, and put this vector through a softmax activation. More precisely, the output of the network is
\[\tilde{p}_y = \mathrm{softmax}_y(\frac{\tilde{I}(\zz,y \,;\, \tilde{\muv})}{\tilde{I}(\zz,y \,;\, \tilde{\muv}) + \sum_{\hat{y}\neq y}{\min_{ij}{\tilde{\mu}_{ij}(z_i,z_j,y)}}}). \]
Here $\tilde{I},\tilde{\muv}$ are the smoothed approximations of $I,\muv$ defined in the paper. We calculate $\tilde{\muv}$ by multiplying the one hot vectors for labeled data with the sigmoid activation $\zz$, then $\tilde{I}$ is calculated by injecting $\tilde{\muv}$ to \eqref{eq:Ixmu}.

The loss we use for labeled data is the the cross-entropy of $\tilde{p}_y$ and the true label. To perform semi-supervised learning, we add a regularizer for unlabelled data $-\max_y{\tilde{p_y}}$. This is in similar vein to transductive methods \cite{DBLP:conf/icml/Joachims99,DBLP:journals/jmlr/CollobertSWB06}, that attempt to increase a confidence measure over unlabelled data.
}

We compare our results with those obtained by Variational Autoencoders and Ladder Networks. Although we do not expect to get the same high accuracies these methods obtain, getting comparable numbers with a simple regularizer (compared to the elaborate techniques used in these works) like the one we suggest, is an encouraging sign for the possibility of learning features that induce high confidence. We also compare to an architecture similar to ours, but that uses minimum entropy regularization \cite{grandvalet2005semi} on a softmax layer connected to $\zz$ (i.e., it does not use our bounds at all). In this case we also add $\ell_2$ regularization on the weights of the soft-max layer, since otherwise entropy can always be driven to zero in the separable case. Finally, we also experimented with adding a hinge loss as a regularizer (as in Transductive SVM \cite{DBLP:conf/icml/Joachims99}), but omit it from the comparison because it did not yield significant improvement over a purely supervised MLP and entropy regularization.

\subsection{MNIST Dataset}
We trained the models described above on the MNIST dataset, using $100$ and $1000$ labeled samples (see \cite{DBLP:conf/nips/KingmaMRW14} for a similar setup). We set the two regularization parameters required for the entropy regularizer and the one required for our minimum probability regularizer with five fold cross validation. We used $10\%$ of the training data as a validation set and compared error rates on the $10000$ samples of the test set. Results are shown 
in Figure \ref{fig:tst_mnist}. They show that on the $1000$ sample case we are slightly outperformed by VAE and for $100$ samples we lose by $1\%$. Ladder networks outperform the other baselines.
\begin{figure} [h]
\centering
{\footnotesize
\begin{tabular}{l ccccc} 
N & Ladder \cite{DBLP:conf/nips/RasmusBHVR15}  & VAE \cite{DBLP:conf/nips/KingmaMRW14} & Robust Probs & Entropy & MLP+Noise \\
\hline
100 & $1.06 (\pm 0.37)$ & $3.33 (\pm 0.14)$ & $4.44 (\pm0.22)$ & $18.93 (\pm 0.54)$ &  $21.74(\pm 1.77)$\\
1000 & $0.84(\pm 0.08)$ & $2.40 (\pm 0.02)$ & $2.48 (\pm 0.03)$ & $3.15(\pm 0.03)$ & $5.70 (\pm 0.20 )$
\end{tabular}
}
\caption{Error rates of several semi-supervised learning methods on the MNIST dataset with few training samples.}
\label{fig:tst_mnist}
\end{figure}
\paragraph{{\bf Accuracy vs. Coverage Curves:}}
In self-training and co-training methods, a classifier adds its most confident predictions to the training set and then repeats training. A crucial factor in the success of such methods is the error in the predictions we add to the training pool. Classifiers that use confidence over unlabelled data as a regularizer are natural choices for base classifiers in such a setting. Therefore an interesting comparison to make is the accuracy we would get over the unlabeled data, had the classifier needed to choose its $k$ most confident predictions.

We plot this curve as a function of $k$ for the entropy regularizer and our min-probabilities regularizer. Samples in the unlabelled training data are sorted in descending order according to confidence. Confidence for a sample in entropy regularized MLP is calculated based on the value of the logit that the predicted label received in the output layer. For the robust probabilities classifier, the confidence of a sample is the minimum conditional probability the predicted label received.
As can be observed in \figref{fig:cvg_vs_acc}, our classifier ranks its predictions better than the entropy based method. We attribute this to our classifier being trained to give robust bounds under minimal assumptions.
\begin{figure}[h] 
\centering
\includegraphics[width=0.5\textwidth]{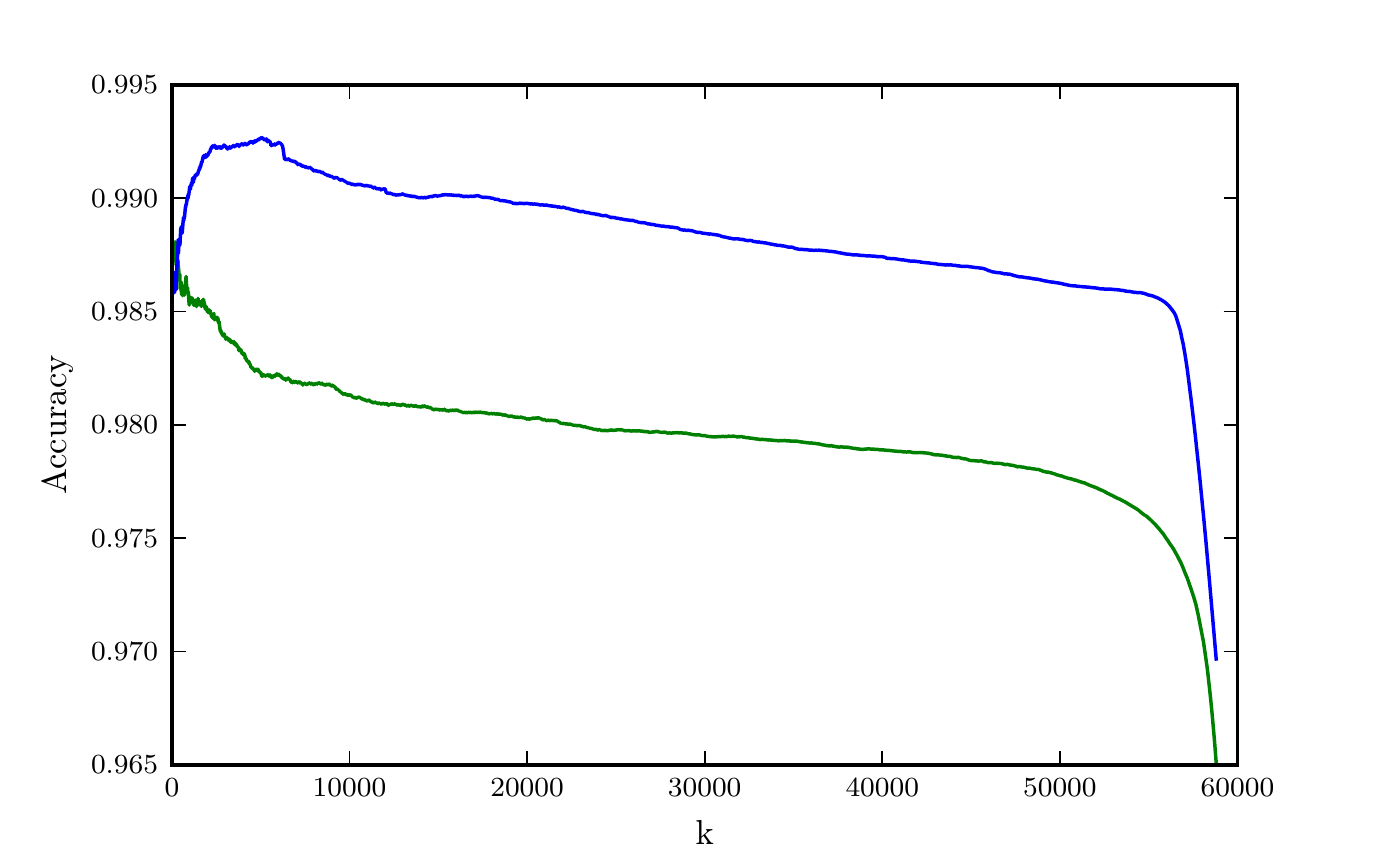} \label{fig:cvg_vs_acc}
\caption{Accuracy for $k$ most confident samples in unlabelled data. Blue curve shows results for the Robust Probabilities Classifier, green for the Entropy Regularizer. Confidence is measured by conditional probabilities and logits accordingly.}
\end{figure}

\subsection{Multilabel Structured Prediction \label{sec:struct_pred}}
As mentioned earlier, in the structured prediction setting it is more difficult to learn features that yield high certainty. We therefore provide a demonstration of our method on a dataset where the raw features are relatively informative.
The Genbase dataset taken from \cite{mulan}, is a protein classification multilabel dataset. It has $662$ instances, divided into a training set of 463 samples and a test set of 199, each sample has $1185$ binary features and 27 binary labels. We ran a structured-SVM algorithm, taken from \cite{JMLR:v15:mueller14a} to obtain a classifier that outputs a labelling $\hat{y}$ for each $\xx$ in the dataset (the error of the resulting classifier was $2\%$).
We then used our probabilistic bounds to rank the classifier's predictions by their robust conditional probabilities. The bounds were calculated based on the set of marginals 
$\mu_{ij}(x_i,y_j)$, estimated from the data for each pair of a feature and a label $X_i,Y_j$. This set of marginals corresponds to a non-tree structure and we handled it as discussed in \secref{sec:discuss}. Observing the values of our bounds, it turned out that $85\%$ of these were above $0.99$, indicating a high level of certainty that this is the correct label. Indeed only $0.59\%$ of these $85\%$ were actually errors. The remaining errors
made by the classifier were assigned min conditional probability zero by our bounds, indicating low level of certainty.

\section{Discussion \label{sec:discuss}}
We presented a method for bounding conditional probabilities of a distribution based only on knowledge of its low order marginals. Our results can be viewed as a new
type of moment problem, bounding a key component of machine learning systems, namely the conditional distribution. As we show, calculating these bounds raises many challenging
optimization questions, which surprisingly result in closed form expressions in some cases.

While the results were limited to the tree structured case, some of the methods have natural extensions to the cyclic case that still result in robust estimations. For instance, the local marginal polytope in \eqref{eq:max_compact_lps} can be taken over a cyclic structure and still give a lower bound on maximum probabilities. Also in the presence of the cycles, it possible to find the spanning tree that induces the best bound on \eqref{eq:Ixmu} using a maximum spanning tree algorithm. Plugging these solutions into \eqref{eq:cond_struct} results in a tighter approximation which we used in our experiments.

Our method can be extended in many interesting directions. Here we addressed the case of discrete random variables, although we also showed in our experiments how these can be dealt with in the context of 
continuous features. It will be interesting to calculate bounds on conditional probabilities given expected values of continuous random variables. In this case, sums-of-squares characterizations play a key role \cite{lasserre2001global,parrilo2003semidefinite}, and their extension to the conditional case is an exciting challenge. It will also be interesting to study how these bounds can be used in the context of unsupervised learning. One natural approach here would be to learn constraint functions such that the lower bound is maximized.

Finally, we plan to study the implications of our approach to diverse learning settings, from self-training to active learning and safe reinforcement learning.
 
\clearpage


\bibliography{minmaxprobs,from_isf}  
\bibliographystyle{abbrv}

\newpage
\appendix
{\Large\textbf{Proofs}}

This appendix provides detailed proofs of theoretical results in the paper.

We first recall a property of functions that decompose over a tree structure. Assume we have a directed tree $G$ with $n$ nodes. Denote by $r$ its root, and by $pa(i)$ the parent of node $i$. Note that
any undirected tree can be turned into a directed one by directing it away from an arbitrarily selected root. Now consider a function $\lambda(x_1,\ldots,x_n)$ over $n$ discrete variables. We will abbreviate $x_1,\ldots,x_n$ by $\xx$ wherever clear from context.
Assume that $\lambda(\xx)$ is defined as follows:
 \[ \lambda(\xx) =  \lambda_r(x_r) + \sum_{i\neq r}{\lambda_{i,pa(i)}(x_i,x_{pa(i)}) + \lambda_{i}(x_i)}. \]
 where $\lambda_r,\lambda_i$ and $\lambda_{i,j}$ are given singleton and pairwise functions. Then $\lambda(\xx)$ can be reparameterised using min ``marginals", as defined below (See \cite{cowell2006probabilistic,wainwright2008graphical} for proof of this result for max marginals and generalizations that include min operators):
	\begin{align} \label{eq:reparam}
		\lambda(\xx) &= \lambdab_r(x_r) + \sum_{i\neq r}{\lambdab_{i,pa(i)}(x_i,x_{pa(i)}) - \lambdab_{pa(i)}(x_{pa(i)})} \\
		\lambdab_i(x_i) &= \min_{\zz: z_i=x_i}{\lambda(\zz)}, \, \lambdab_{ij}(x_i,x_j) = \min_{\zz: z_i,z_j=x_i,x_j}{\lambda(\zz)} \nonumber
	\end{align}

Such $\lambda$ functions will arise, whenever we take the dual of a problem whose variables are a probability distribution constrained to satisfy some marginal distributions. Specifically, the multipliers $\lambda_i(x_i), \lambda_{ij}(x_i,x_j)$ will be those that correspond respectively to the primal constraints: \[ \sum_{\zz:z_i=x_i}{p(\zz)}=\mu_i(x_i), \sum_{\zz:z_i,z_j=x_i,x_j}{p(\zz)}=\mu_{ij}(x_i,x_j). \]

\section{Proof of \lemref{lem:multiclass_cond}}
Let us begin with the proof of \lemref{lem:multiclass_cond}, in which we derive the form of solutions used in our experiments.
\begin{proof}
	We start by writing the problem down in the following manner: \[ \min_{p\in{\linstatpoly{}}}{\frac{p(\xx,y)}{p(\xx,y) + \sum_{\yh\neq y}p(\xx,\hat{y})}}. \]
	It is obvious that in order to minimize the objective, the higher $p(\xx,\hat{y})$ is for $\hat{y}\neq y$ and the lower $p(\xx,y)$, the lower objective we get. We now notice that each of the assignments can be maximized or minimized independently, because they appear in totally distinct constraints in $\linstatpoly{}$. This is true because all constraints in $\linstatpoly{}$ are of the form:
	\[ \sum_{\zz: z_i,z_j=x_i,x_j}{p(\zz,\bar{y})} = \mu_{ij}(x_i,x_j,\bar{y}). \]
	Hence, for any pair $y_1\neq y_2$, non of the variables in $\{ p(\xx_1,y_1) \mid \xx_1\in{\mathcal{X}}\}$ appear in the same constraint with a variable in $\{ p(\xx_2,y_2) \mid \xx_2\in{\mathcal{X}}\}$, so all variables $p(\xx,\hat{y}),p(\xx,y)$ can be maximized or minimized separately. We already know from \cite{fromer2009lp} that \[ \min_{p\in{\linstatpoly{}}}{p(\xx,y)}=I(\xx,y \,;\, \muv).\]
	It is left to show that \[\max_{p\in{\linstatpoly{}}}{p(\xx,\bar{y})} = \min_{ij}{\mu_{ij}(x_i,x_j,\bar{y})},\] then the result of the lemma follows immediately. To prove the above equality we take the dual LP of the left hand side:
	\begin{align} \label{eq:dual_of_max}
	\min ~ & \lambdav \cdot \muv \\
	\text{s.t. } & \lambda(\xx,y) \geq 1 \nonumber \\
	& \lambda(\zz,\bar{y}) \geq 0 \quad \forall \zz\neq\xx \vee \bar{y}\neq y. \nonumber
	\end{align}
	Here $\lambda(\cdot)$ are the dual variables, which we can think of as a function that decomposes over a directed tree:
	\begin{align*}
	\lambda(\xx,y) =  \lambda_r(x_r,y) + \sum_{i\neq r}{\lambda_{i,pa(i)}(x_i,x_{pa(i)},y) + \lambda_{i}(x_i,y)}.
	\end{align*}
	The inner product $\lambdav\cdot \muv$ is given by:
	\begin{align}
	\sum_{i,z_i}{\lambda_i(z_i)\mu_i(z_i)} + \sum_{ij\in{E},z_i,z_j}{\lambda_{ij}(z_i,z_j)\mu_{ij}(z_i,z_j)}.
	\end{align}
	Let us take the min-reparameterization of this function and then take its expectation over a distribution $p\in{\mathcal{P}(\muv)}$. The following inequality holds for any feasible $\lambda$:
	\begin{align*}
	 \expect{p}{\lambda(\xx,y)} = &\sum_{z_r}{\mu_r(z_r)\lambdab_r(z_r,y)} + \sum_{\substack{i\neq r \\ z_i,z_{pa(i)}}}{\mu_{i,pa(i)}(z_i,z_{pa(i)})}(\lambdab(z_i,z_{pa(i)},y) - \lambdab_{pa(i)}(z_{pa(i)},y)) \\
	& \geq \mu_r(x_r)\lambdab_r(x_r,y) + \sum_{i\neq r}{\mu_{i,pa(i)}(x_i,x_{pa(i)})}(\lambdab(x_i,x_{pa(i)},y) - \lambdab_{pa(i)}(x_{pa(i)},y)).
	\end{align*}
	The inequality is true because any feasible $\lambda$ is non-negative, hence $\lambdab_r(z_r)\geq 0$ and because min-marginals over a pair of variables are always larger than those over one of them. We will conclude the proof by observing that:
	\begin{itemize}
		\item The right hand side of the inequality is a combination of the $\mu$s that are consistent with $\xx,y$ and the coefficients of this combination sum up to: \[ \lambdab_r(x_r,y) + \sum_{i\neq r}{\lambdab(x_i,x_{pa(i)},y) - \lambdab_{pa(i)}(x_{pa(i)},y)} = \lambda(\xx,y) \geq 1. \]
		The equality holds due to the reparametrization property in \eqref{eq:reparam} and $\lambda$'s feasibility. Since the sum is higher than $1$, the right hand side is also larger than any convex combination of the $\mu$s, which in turn is larger than the smallest element in the combination. We arrive at the conclusion that: \[ \expect{p}{\lambda(\xx,y)} \geq \min_{ij}{\mu_{ij}(x_i,x_j,y)}. \]
		\item It also holds that $\lambdav\cdot\muv = \expect{p}{\lambda(x)}$, hence the objective of any feasible solution is larger than $\min_{ij}{\mu_{ij}(x_i,x_j,y)}$. On the other hand, setting $\lambda_{ij}(x_i,x_j,y)=1$ for a minimizing pair $i,j$ and all other variables to $0$ results in a feasible solution with exactly this objective. It follows that this must be the optimal value of the problem.
	\end{itemize}
\end{proof}

\section{Notations for Remainder of the Proofs}\label{sec:notations}
To allow for a more convenient notation, from now on we treat labels as hidden variables. That is, instead of $n$ features and $r$ labels, we assume there are just $n$ variables $X_1,\ldots,X_n$. The first $m$ are hidden (these will play the role of a label) and the last $n-m$ are observed, where $m$ may be between $0$ and $n-1$. For an assignment $\xx$, we refer to the hidden part as $\xx_h$ and the observed as $\xx_o$. The split into hidden and observed variables will mainly serve us in the proof of \thmref{thm:struct_cond}, in other proofs it is just more convenient to not split expressions to $\xx,\yy$.

We also denote the subvector of $\muv$ over hidden variables and edges between them as $\muv_h$. That is, considering the items of $\muv$ are expressions $\mu_i(z_i),\mu_{ij}(z_i,z_j)$, $\muv_h$ is the subvector containing items where $i\in{h},i,j\in{h}$ respectively. Define a similar vector $\muv_o$ for observed variables and edges between them. The vectors $\mathbb{I}_{\xx},\mathbb{I}_{h,\xx}$ are defined to have the same indices as $\muv,\muv_h$ respectively, their value is $1$ in indices consistent with $\xx$ (i.e. $z_i,z_j = x_i,x_j$ or $z_i=x_i$ for entries that contain $\mu_{ij}(z_i,z_j),\mu_i(z_i)$ respectively) and $0$ otherwise. We will use the shorthand $\mathbf{I}_{\xx}$ for the vector $I(\xx;\mu)\mathbb{I}_{\xx}$.

Some notations related to graphical properties of hidden and observed nodes will be required. The number of connected components in the subgraph of hidden variables and edges between them is $|P_h|$, similarly for observed variables we will use $|P_o|$. The set of edges $ij$ between hidden nodes (i.e. $i,j\in{h}$) is $E_h$, between a hidden and observed node (i.e. $i\in{o},j\in{h}$ w.l.o.g) is $E_{oh}$ and between observed nodes (i.e. $i,j\in{o}$) is $E_o$. The degree of node $i$ is $d_i$ and the number of its hidden neighbors is $d_i^h$.

Finally, we define variations on the objects related to graphical models that we use in the paper. The functional $\tilde{I}(\cdot\,;\, \muv)$ is the same functional defined in \eqref{eq:Ixmu}, only without the ReLU operator:
\begin{align*}
\tilde{I}( \xx \,;\, \muv) = \sum_{i}{(1-d_i)\mu_i(x_i)} + \sum_{ij\in{E}}{\mu_{ij}(x_i,x_j)}.
\end{align*}
We will also use two variants on the local marginal polytope \cite{wainwright2008graphical}:
\begin{align*}
\lclmargpoly = \left\{ \muvt \mid \substack{\sum_{x_j}{\mut_{ij}(x_i,x_j)} = \mut_i(x_i) \quad \forall ij\in{E},x_i \\ \sum_{x_i}{\mut_{ij}(x_i,x_j)} = \mut_j(x_j) \quad \forall ij\in{E},x_j},~ \substack{\sum_{x_i}{\mut_i(x_i)} = 1 \quad\; \forall i \\ \sum_{x_i,x_j}{\mut_i(x_i,x_j)} = 1 \quad \forall i,j\in{E} } \right\}.
\end{align*}
One variant we use is $\lclmargpoly(U)$ that was defined in the paper. The other is $\lclmargpoly^{h}$, where items contain marginals only on hidden variables and edges between them:
\begin{align*}
\lclmargpoly^h = \{ \muvt \mid \substack{\sum_{x_i\in{\bar{X}_i}}{\mut_{ij}(x_i,x_j)} = \mut_j(x_j) \quad \forall (i,j)\in{E_h} \\ \sum_{x_j\in{\bar{X}_j}}{\mut_{ij}(x_i,x_j)} = \mut_i(x_i) \quad \forall (i,j)\in{E_h}}   \}.
\end{align*}

\section{Proof of \lemref{lem:setcover}}

We start by proving the connection to Set-Cover and then move on to Max-Flow.
\subsection{Connection to Set-Cover}
\begin{proof}
Let us write down the dual of \eqref{eq:maxprob}:
\begin{align} \label{eq:overall_dual}
\min ~ & \lambdav \cdot \muv \\
\text{s.t. } & \lambda_r(x_r) + \sum_{i\neq r}{\lambda_{i,pa(i)}(x_i,x_{pa(i)}) + \lambda_{i}(x_i)} \geq 0 \quad \forall \xx\notin{U} \nonumber \\
&\lambda_r(x_r) + \sum_{i\neq r}{\lambda_{i,pa(i)}(x_i,x_{pa(i)}) + \lambda_{i}(x_{i})} \geq 1 \quad \forall \xx\in{U}, \nonumber
\end{align}
This is already very similar to the LP Relaxation of Set-Cover, but with the significant difference that variables $\lambda$ are unrestricted, where in the Set-Cover LP they are non-negative. This is where the tree structure plays an important role. Consider the min-reparameterization of any feasible solution $\lambda(x)$:
\begin{align*}
\lambda(\xx) = \lambdab_r (x_r) + \sum_{i\neq r}{\lambdab_{i,pa(i)}(x_i,x_{pa(i)}) - \lambdab_{pa(i)}(x_{pa(i)})}.
\end{align*}
Since $\lambdav$ is feasible and the constraints demand that $\lambda(\xx)$ is non negative for all $\xx$, it is clear that $\lambdab_r(x_r)\geq 0$. Moreover, because $\lambdab$ is a min-reparameterization it is easy to see that $\lambdab_{i,pa(i)}(x_i,x_{pa(i)}) - \lambdab_{pa(i)}(x_{pa(i)}) \geq 0$. This is true because constraining a minimization on $x_i,x_{pa(i)}$ gives a higher result than constraining on $x_{pa(i)}$ alone.

Now let us look at the LP Relaxation of the aforementioned Set-Cover problem:
\begin{align} \label{eq:dual_tree_1}
\min ~ & \deltav\cdot\muv  \\
\text{s.t. } & \delta_r(x_r) + \sum_{i\neq r}{\delta_{i,pa(i)}(x_i,x_{pa(i)}) + \delta_{i}(x_i)} \geq 0 \quad \forall \xx\notin{U} \nonumber \\
&\delta_r(x_r) + \sum_{i\neq r}{\delta_{i,pa(i)}(x_i,x_{pa(i)}) + \delta_{i}(x_{i})} \geq 1 \quad \forall \xx\in{U}, \nonumber \\
& \deltav \geq 0 \nonumber
\end{align}
Obviously, if $\deltav$ is feasible for \eqref{eq:dual_tree_1}, setting $\lambdav = \deltav$ gives a feasible solution to \eqref{eq:overall_dual} with the same objective as $\deltav$'s in \eqref{eq:dual_tree_1}. That is, this problem is more constrained than \eqref{eq:overall_dual}. Yet given a feasible solution to \eqref{eq:overall_dual}, we can use the min-reparameterization and obtain a feasible solution to the above problem with the same objective $\lambdav\cdot\muv$:
\begin{align*}
\delta_{i}(x_i) = \begin{cases}
\lambdab_{r}(x_r) & i=r \\
0 & i\neq r
\end{cases}, \quad 
\delta_{i,pa(i)}(x_i,x_{pa(i)}) = \lambdab_{i,pa(i)}(x_i,x_{pa(i)}) - \lambdab_{pa(i)}(x_{pa(i)}).
\end{align*}
It is easy to see that because of the min-reparameteriztion property, $\delta(\xx)=\lambda(\xx)$ for all $\xx$ and $\deltav\geq 0$. This means that $\deltav$ is feasible and that the objectives are equal. To verify the latter, consider a distribution $p\in{\mathcal{P}(\muv)}$. Taking the expectations of $\deltav,\muv$ with respect to $p$ shows the equality in objectives:
\begin{align*}
\lambdav\cdot\muv = \expect{p}{\lambda(\xx)} = \expect{p}{\delta(\xx)} = \deltav\cdot\muv.
\end{align*}
We conclude that while the set cover LP Relaxation is more constrained, all feasible solutions of \eqref{eq:overall_dual} can be mapped to feasible solutions of this relaxation in a manner that preserves the objective. Hence the problems have the same value.
\end{proof}
Let us emphasize the following two points:
\begin{itemize}
	\item This part of the lemma did not exploit the specific choice of $U$ (being consisted of all assignments where variables take values in a certain set $\bar{X}_i$). That is, it holds for any choice of $U$, not only those of the form mentioned in \eqref{eq:U_def}.
	\item The constraints for $\xx\notin{U}$ in \eqref{eq:dual_tree_1} are redundant because $\deltav \geq 0$. Removing these constraints and moving back from \eqref{eq:dual_tree_1} to its dual, expressed with variables $p$, we get another formulation of \eqref{eq:maxprob}. We will use this in the next part of the proof and also later on, we thus state it as a corollary.
\end{itemize}
\begin{corollary} \label{cor:inequality_LP}
Let $U$ be a universe of assignments (not necessarily of the form in \eqref{eq:U_def}) and $\muv$ a tree-structured vector of marginals. The following LP has the same value as \eqref{eq:maxprob}:
\begin{align} \label{eq:maxprob_reform}
\max_{p\geq 0} ~ \sum_{\uu\in{U}}{p(\uu)}& && \\
\text{s.t. } \sum_{\substack{\uu\in{U} \\ u_i,u_{j}=z_i,z_{j}}}{p(\uu)} &\leq \mu_{i,j}(z_i,z_{j}) & & \forall ij\in{E},z_i,z_j \nonumber \\
\sum_{\substack{\uu\in{U} \\ u_i=z_i}}{p(\uu)} &\leq \mu_{i}(z_i) & & \forall i\in{V},z_i \nonumber
\end{align}
\end{corollary}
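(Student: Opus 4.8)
The plan is to leverage the two observations highlighted immediately above the corollary and to close the argument with one more application of strong LP duality. We have already shown in the first part of \lemref{lem:setcover} that \eqref{eq:maxprob} has the same value as the Set-Cover LP relaxation \eqref{eq:dual_tree_1}; the corollary will follow by simplifying that relaxation and dualizing it back into the variables $p$.

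First I would discard the constraints indexed by $\xx\notin U$ in \eqref{eq:dual_tree_1}. Each such constraint reads $\delta_r(x_r) + \sum_{i\neq r}[\delta_{i,pa(i)}(x_i,x_{pa(i)}) + \delta_i(x_i)] \geq 0$, and since the relaxation already imposes $\deltav \geq 0$ componentwise, its left-hand side is a sum of nonnegative terms, so the inequality holds automatically. Hence these constraints are redundant, and removing them leaves the optimal value unchanged. What remains is exactly the standard weighted Set-Cover LP relaxation: minimize $\deltav\cdot\muv$ subject to the covering constraints (one per $\xx\in U$, demanding that the $\delta$-weight of the sets covering $\xx$ be at least $1$) together with $\deltav \geq 0$, where the ``sets'' are the singleton and edge indicators carrying weights $\mu_i$ and $\mu_{ij}$.

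Next I would take the LP dual of this reduced program. Introducing a nonnegative dual variable $p(\uu)$ for each covering constraint $\uu\in U$, the dual objective becomes $\max\sum_{\uu\in U} p(\uu)$ (each covering constraint has right-hand side $1$), while the dual constraint attached to each set caps the total $p$-mass it covers by that set's weight: for a singleton $(i,z_i)$ it reads $\sum_{\uu\in U: u_i=z_i} p(\uu) \leq \mu_i(z_i)$, and for an edge $(ij,z_i,z_j)$ it reads $\sum_{\uu\in U: u_i,u_j=z_i,z_j} p(\uu) \leq \mu_{ij}(z_i,z_j)$. These are precisely the constraints of \eqref{eq:maxprob_reform}, with $p$ ranging only over $U$ and not required to be normalized. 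Both LPs are feasible and bounded, so strong duality gives equality of their values, and chaining the equalities yields that \eqref{eq:maxprob_reform} has the same value as \eqref{eq:maxprob}.

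The only delicate point, and the step I would check most carefully, is the dualization bookkeeping: matching each set of the cover (a singleton $(i,z_i)$ or an edge $(ij,z_i,z_j)$) to its dual constraint, verifying that the coverage pattern of that set produces exactly the marginal inequality over $\uu\in U$ with the correct index set, and confirming that the $\leq$ direction and the sign constraint $p\geq 0$ come out right. Everything else is a direct transcription of the standard covering/packing LP duality, so no genuinely new estimate is needed beyond what the first part of \lemref{lem:setcover} has already supplied.
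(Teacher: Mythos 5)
Your proposal is correct and follows essentially the same route as the paper: the paper likewise obtains the corollary by invoking the first part of \lemref{lem:setcover}, discarding the constraints for $\xx\notin U$ in \eqref{eq:dual_tree_1} as redundant given $\deltav\geq 0$, and dualizing the reduced covering LP back into the packing form \eqref{eq:maxprob_reform}. Your dualization bookkeeping (one dual variable $p(\uu)\geq 0$ per covering constraint, one capacity constraint per singleton or edge ``set'') matches what the paper leaves implicit.
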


\subsection{Equivalence to Max-Flow}
As stated in the \secref{sec:maxflow}, when the underlying graph is a chain, \eqref{eq:maxprob} is a Max-Flow problem. The equivalence to Max-Flow is apparent when thinking of every assignment $\xx\in{U}$ as a path in a flow network. 
Assume our statistics $\muv$ are $\mu_{1,2},\mu_{2,3},\ldots,\mu_{n-1,n}$, then define a flow network with source and sink $s,t$ and  a node $(i,x_i)$ for each variable $i$ and $x_i\in{\bar{X}_i}$ (i.e. one node for each variable-assignment pair). The edges of the network are $ (i,x_i)\rightarrow(i+1,x_{i+1})$ for each $0\leq i \leq n-1$ and $x_i,x_{i+1}\in{\bar{X}_i\times \bar{X}_{i+1}}$, they will have capacity $\mu_{i,i+1}(x_i,x_{i+1})$. Additionally we will have edges $s\rightarrow(1,x_1),(n,x_n)\rightarrow t$ for each $x_1$ and $x_n$ with unbounded capacity.

It is simple to see that there is a one-to-one correspondence between paths from $s$ to $t$ and assignments in $U$. This is where $U$'s special structure, stated in \eqref{eq:U_def} of the paper comes into play. Also, the paths that go through each edge $(i,x_i)\rightarrow(i+1,x_{i+1})$ are exactly those of assignments $\zz$ where $z_i,z_{i+1}=x_i,x_{i+1}$. According to flow decomposition \cite{ford2015flows}, the LP in \eqref{eq:maxprob_reform} solves the Max-Flow problem on this network (where the flow is expressed as the sum of flows in all $s-t$ paths in the network), with a single exception that it does not contain the constraints: \[ \sum_{\substack{\uu\in{U} \\ u_i=z_i}}{p(\uu)} \leq \mu_{i}(z_i) \quad \forall i\in{V},z_i. \]
Thus to finish the proof we will get convinced that these added constraints are redundant. Consider a solution $p$ that only satisfies the constraints of pairwise marginals in \eqref{eq:maxprob_reform}, we will show it also satisfies the constraints above. Let $i\in{[n]}$ and $x_i\in{\bar{X}_i}$ and let $j$ be a neighbour of $i$ in the chain (the graph is connected, so there always is a neighbour), then: \[ \sum_{\substack{\uu\in{U} \\ u_i=x_i}}{p(\uu)} = \sum_{u_{j}\in{\bar{X}_j}}\sum_{\substack{\uu\in{U} \\ u_i,u_j=x_i,x_j}}{p(\uu)} \leq \sum_{u_j\in{\bar{X}_j}}{\mu_{ij}(x_i,u_j)} \leq \mu_i(x_i). \]
This shows the constraint is satisfied and concludes our proof.

The next proof, that of \thmref{thm:minmaxprobs}, is for results on maximizing probabilities. When the underlying graph is a chain, these results are similar to the equivalence to Max-Flow that we just proved. When the graph is not a chain, they will give an LP that does not directly correspond to a Max-Flow problem, but is still of polynomial size. That is, it can be solved efficiently with a standard LP solver, but not necessarily with a combinatorial algorithm. Our conjecture is that combinatorial algorithms can be derived for other cases, but we defer this to future work.
\section{Proof of \thmref{thm:minmaxprobs}}
The theorem reformulates the following problems:
\begin{align} \label{eq:maxprobs_primalform}
\max_{p\in{\linstatpoly{}}}{\sum_{\uu\in{U}}p(\uu)}, \max_{p\in{\linstatpoly{}}}{\sum_{\uu\in{U\setminus \xx}}{p(\uu)}}.
\end{align}
Our goal is to show that they have the same optimum as:
\begin{align} \label{eq:maxprobs_compact} \max_{\muvt\in{\lclmargpoly(U)},\muvt\leq\muv}{Z(\muvt)}, \max_{\substack{\muvt\in{\lclmargpoly(U)},\muvt\leq\muv \\ I(\xx\,;\, \muvt) \leq 0}}{Z(\muvt)}. \end{align}
\begin{proof}
To show equality of the optimal values, let us offer a mapping between feasible solutions of the pairs of problems. From our previous results, both problems in \eqref{eq:maxprobs_primalform} can be written in the form of \eqref{eq:maxprob_reform} with $U$ and $U\setminus \xx$ respectively. We will start by mapping feasible solutions of these problems to feasible solutions of \eqref{eq:maxprobs_compact}.

Choose an arbitrary root for the tree, $r\in{V}$, and turn the undirected tree to a directed one rooted in $r$. Consider a feasible solution $p$ to the reformulated problem in \eqref{eq:maxprob_reform} and define:
\begin{align*}
\tilde{\mu}_{i,pa(i)}(u_i,u_{pa(i)}) &= \sum_{\zz\in{U}: z_i,z_{pa(i)}=u_i,u_{pa(i)}}{p(\zz)} &  &\forall (u_i,u_{pa(i)}) \in{\bar{X}_i \times \bar{X}_{pa(i)}} \\
\tilde{\mu}_{i}(u_i) &= \sum_{\zz\in{U}: z_i=u_i}{p(\zz)} & &\forall u_i \in{\bar{X}_i }
\end{align*}
It is simple to prove that $\tilde{\muv}\in{\lclmargpoly(U)}$, because for any pair $ij\in{E}$ it holds that:
\[ 
\sum_{u_j\in{\bar{X}_j}}{\tilde{\mu}_{i,j}(u_i,u_j)} = \sum_{u_j\in{\bar{X}_j}}{\sum_{\zz\in{U}: z_i,z_j=u_i,u_j}{p(\zz)}} = \sum_{\zz\in{U}: z_i=u_i}{p(\zz)}=\mut_i(u_i).
\]  And from $p$'s feasibility we also get $\tilde{\muv}\leq \muv$. This can be seen from inequalities of the following type:
\[
\tilde{\mu}_{i,j}(u_i,u_j) = \sum_{\zz\in{U}: z_i,z_j=u_i,u_j}{p(\zz)} \leq \mu_{ij}(u_i,u_j).
\]
We conclude that $\muvt$ is a feasible solution to \eqref{eq:maxprobs_compact} with objective:
\begin{align*}
Z(\muvt) = \sum_{u_r\in{\bar{X}_r}}{\tilde{\mu}_r(z_r)} = \sum_{u_r\in{\bar{X}_r}}{\sum_{\zz\in{U}: z_r=u_r}{p(\zz)}} = p(U).
\end{align*}
This mapping only considered the first problem in \eqref{eq:maxprobs_primalform}. We can use the exact same construction when considering $U\setminus \xx$ as follows. Feasible solutions to \eqref{eq:maxprob_reform} are functions $p:U\setminus \xx \rightarrow \mathbb{R}_+$, so extending $p$'s domain to $U$ by setting $p(\xx)=0$, the above equations remain unaltered. It is left to show that the resulting $\muvt$ satisfies $I(\xx;\muvt)\leq 0$. If we examine the term $I(\xx;\muvt)$, when $d_i$ is the degree of node $i$ in the graph, we get that:
\begin{gather*}
\sum_{i}{(1-d_i)\mut_i(x_i)} + \sum_{ij}{\mut_{ij}(x_i,x_j)} = \sum_{\uu\in{U}}{\alpha_{\uu} p(\uu)}, \\
\alpha_{\uu} \triangleq \sum_{i}{\mathbb{I}_{u_i=x_i}} - \sum_{ij}{\mathbb{I}_{(u_i=x_i) \vee (u_j=x_j)}}.
\end{gather*}
Simple counting arguments show that $\alpha_{\xx}=1$, while $\alpha_{\uu}\leq 0$ for all $\uu\neq\xx$. Since we set $p(\xx)=0$, it follows that $\sum_{\uu\in{U}}{\alpha_{\uu} p(\uu)} \leq 0$ and also $I(\xx;\muvt)$.

It is left to provide a mapping from solutions of \eqref{eq:maxprobs_compact} to solutions of \eqref{eq:maxprobs_primalform}.
We will provide a proof for the case where
\[
U = \left\{ \uu \mid u_i\in{\bar{X}_i} \quad \forall i\in{[n]} \right\}.
\]
More specifically, we will construct a function $p : U \rightarrow \mathbb {R}_+$ whose marginals are $\muvt$ and summing it over all of its domain gives $Z(\muvt)$. The construction is the same one used when proving that the local marginal polytope is equal to the marginal polytope for tree graphs \cite{wainwright2008graphical}. To complete the proof, we will also need to show a construction when $p$'s domain is $U\setminus \xx$ (and $U$ defined the same as above). We refer the reader to \cite{fromer2009lp} where this detailed construction can be found. There the sum of $p$ over its domain is $1$, yet applying this construction to $\muvt$ gives a function that sums up to $Z(\muvt)$.

The function $p$ we suggest for the problem over domain $U$ is:
\begin{align*}
p(\uu) = \tilde{\mu}_r(u_r)\prod_{i\neq r}{\frac{\tilde{\mu}_{i,pa(i)}(u_i,u_{pa(i)})}{\tilde{\mu}_{pa(i)}(u_i)} }.
\end{align*}
Assume $r$ is set arbitrarily and $1,\ldots,n$ is a topological ordering of the nodes. Notice that any choice of $r$ and an ordering yields the same function $p$. It is simple to see that the function marginalizes to $\mut$ if we let $ij\in{E}$, set $i$ as the root and eliminate all variables other than $i,j$. To show that $p$'s sum over its domain $U$ is exactly the partition function, eliminate all the variables to get:
\begin{align*}
&\sum_{\xx\in{U}}p(\xx) = \\ &\sum_{u_1\in{\bar{X}_1}}{\tilde{\mu}_1(u_1)} \left( \sum_{u_2\in{\bar{X}_2}}{\frac{\tilde{\mu}_{2,pa(2)}(u_2,u_{pa(2)})}{\tilde{\mu}_{pa(2)}(u_2)}} \ldots \left( \sum_{u_n\in{\bar{X}_n}}{\frac{\tilde{\mu}_{n,pa(n)}(u_{n},u_{pa(n)})}{\tilde{\mu}_{pa(n)}(u_{pa(n)})}} \right) \right) = \sum_{u_1\in{\bar{X}_1}}{\tilde{\mu}_1(u_1)}.
\end{align*}
Here we implicitly numbered the root node as $1$. To conclude, we showed a mapping from $\mut$ to a function $p$ that is feasible for \eqref{eq:maxprobs_primalform}, completing the proof.

For the case $U\setminus \xx$, as stated earlier, \cite{fromer2009lp} offer a construction of a function that marginalizes to $\tilde{\muv}$ and achieves $p(\xx) = I(\xx\,;\,\muv)$. Thus enforcing $I(\xx\,;\, \muv)\leq 0$ ensures there is a mapping from $\tilde{\muv}$ to a function $p$ with the same objective.

Notice the equality in the above equation holds because of $U$'s special structure that includes \textbf{all} the assignments that take values in sets $\bar{X}_i$. Different choices of $U$ do not necessarily yield this equation, thus the theorem does not hold for all choices of $U$.\end{proof}

\section{Proof of \thmref{thm:struct_cond}}
We recall the problem at hand of minimizing conditional probabilities:
\begin{align*}
\min_{p\in{\mathcal{P}(\muv)}}{p(\xx_h ~|~ \xx_o)},
\end{align*}
where we assume w.l.o.g that $\xx_h = x_1,\ldots,x_m$ are hidden variables, $\xx_o = x_{m+1},\ldots,x_n$ are observed, and $\xx$ is the fixed assignment to both.
Using the Charnes-Cooper variable transformation \cite{charnes1962programming} between $p(\zz_h,\zz_o)$ and $\frac{p(\zz_h,\zz_o)}{p(\xx_o)}$ for all $\zz$, and taking the dual of the resulting LP, we arrive at the following problem:
\begin{align} \label{eq:cond1}
\max ~ & \lambda_{\xx} \\
\text{s.t. } & \lambda_r(z_r) + \sum_{i\neq r}{\lambda_{i,pa(i)}(z_i,z_{pa(i)}) + \lambda_{i}(z_i)} \leq 0 & & \forall \zz: ~ \zz_o\neq \xx_o \nonumber \\
&\lambda_r(z_r) + \sum_{i\neq r}{\lambda_{i,pa(i)}(z_i,z_{pa(i)}) + \lambda_{i}(z_{i})} \leq -\lambda_{\xx} & & \forall \zz: ~ \zz_o = \xx_o, \zz_h \neq \xx_h \nonumber, \\
&\lambda_r(x_r) + \sum_{i\neq r}{\lambda_{i,pa(i)}(x_i,x_{pa(i)}) + \lambda_{i}(x_{i})} \leq 1-\lambda_{\xx} & & \nonumber \\
&\lambda \cdot \muv \geq 0 \nonumber.
\end{align}
The transformation is correct under the assumption that $p(\xx_o)>0$, which is reasonable to assume when we observe $\xx_o$ and try to infer $\xx_h$.

The rest of the proof can now be decomposed into two main parts, one manipulates \eqref{eq:cond1} and the other manipulates the second problem in \eqref{eq:maxprobs_compact}:
\begin{lemma} \label{lem:cond_closed_form_helper}
Let $U$ be a set of the shape defined in \eqref{eq:U_def} of the paper and $\muv$ a vector of tree shaped marginals.
If 
\begin{align} \label{eq:full_higher_than_excluded}
\max_{p\in{\mathcal{P}(\muv)}}{\sum_{\uu\in{U}}{p(\uu)}} > \max_{p\in{\mathcal{P}(\muv)}}{\sum_{\uu\in{U\setminus \xx}}{p(\uu)}},
\end{align}
then it holds that:
\begin{align*}
\max_{\substack{\tilde{\muv}\in{\lclmargpoly(U)}, \tilde{\muv}\leq\muv \\ I(\xx \,;\, \tilde{\muv}) \leq 0}}~  Z(\muvt) = \max_{\substack{\tilde{\muv}\in{\lclmargpoly(U)}, \tilde{\muv}\leq\muv - \mathbf{I}_{\xx} \\ I(\xx \,;\, \tilde{\muv}) = 0}}~  Z(\muvt).
\end{align*}
\end{lemma}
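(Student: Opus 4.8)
The plan is to prove the two inequalities between the optimal values separately. Write $P_U = \max_{p\in\mathcal{P}(\muv)}\sum_{\uu\in U}p(\uu)$ and $P_{U\setminus\xx} = \max_{p\in\mathcal{P}(\muv)}\sum_{\uu\in U\setminus\xx}p(\uu)$, so that by \thmref{thm:minmaxprobs} the left-hand maximum equals $P_{U\setminus\xx}$, the right-hand maximum is a further constrained version of it, and hypothesis \eqref{eq:full_higher_than_excluded} reads $P_U>P_{U\setminus\xx}$. Since $\mathbf{I}_{\xx}=I(\xx;\muv)\mathbb{I}_{\xx}\ge 0$, the box $\muvt\le\muv-\mathbf{I}_{\xx}$ is tighter than $\muvt\le\muv$, while the exclusion constraint $I(\xx;\muvt)\le 0$ is present on both sides. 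Hence every feasible point of the right-hand problem is feasible for the left-hand one, so the left-hand value is at least the right-hand value, and it remains to exhibit, for the left-hand optimum, a single feasible point of the right-hand problem attaining the same value of $Z$.

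To this end I would realize an optimal left-hand solution as a genuine distribution. By \thmref{thm:minmaxprobs} choose $p^*\in\mathcal{P}(\muv)$ attaining $P_{U\setminus\xx}$ and let $\muvt^*$ be the pseudo-marginals obtained by summing $p^*$ over the configurations of $U\setminus\xx$. A direct check shows $\muvt^*\in\lclmargpoly(U)$, that $Z(\muvt^*)=\sum_{\uu\in U\setminus\xx}p^*(\uu)=P_{U\setminus\xx}$, and, by the counting argument already used in \thmref{thm:minmaxprobs} (where $\alpha_{\xx}=1$, $\alpha_{\uu}\le 0$ for $\uu\neq\xx$, and the mass $p^*(\xx)$ is excluded), that $I(\xx;\muvt^*)\le 0$. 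Thus $\muvt^*$ is an optimal point of the left-hand problem.

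The box constraint is where the content lies, and it comes from the global reservation of mass at $\xx$. For every $\xx$-consistent edge, $\mu_{ij}(x_i,x_j)=\sum_{\zz:z_i,z_j=x_i,x_j}p^*(\zz)$ includes the term $p^*(\xx)$, whereas $\tilde{\mu}^*_{ij}(x_i,x_j)$ sums only over $U\setminus\xx$; hence $\tilde{\mu}^*_{ij}(x_i,x_j)\le\mu_{ij}(x_i,x_j)-p^*(\xx)$, and likewise $\tilde{\mu}^*_i(x_i)\le\mu_i(x_i)-p^*(\xx)$ for $\xx$-consistent singletons. Since $p^*\in\mathcal{P}(\muv)$ forces $p^*(\xx)\ge\min_{p\in\mathcal{P}(\muv)}p(\xx)=I(\xx;\muv)$ by \eqref{eq:Ixmu} \cite{fromer2009lp}, we obtain $\muvt^*\le\muv-\mathbf{I}_{\xx}$. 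Finally, to upgrade the exclusion constraint from $I(\xx;\muvt^*)\le 0$ to the active form $I(\xx;\muvt^*)=0$ demanded on the right, I would invoke the hypothesis through an interpolation argument: letting $\muvt^U$ attain $P_U$, the strict inequality $P_U>P_{U\setminus\xx}$ rules out $\muvt^U$ being left-feasible, so $\tilde{I}(\xx;\muvt^U)>0$; then if $\tilde{I}(\xx;\muvt^*)<0$, on the segment $(1-t)\muvt^*+t\muvt^U$, which stays in $\lclmargpoly(U)$ and below $\muv$, one reaches $\tilde{I}=0$ at some $t>0$ with strictly larger $Z$, contradicting optimality of the left-hand value. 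Hence $I(\xx;\muvt^*)=0$, the point $\muvt^*$ is right-feasible, and the two values coincide.

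The main obstacle is the box reduction: establishing $\muvt^*\le\muv-\mathbf{I}_{\xx}$ cannot be read off from local consistency within $\lclmargpoly(U)$ alone, since it rests on the global fact that any distribution with marginals $\muv$ places mass at least $I(\xx;\muv)$ on $\xx$. This forces the argument to pass through an honest realization $p^*\in\mathcal{P}(\muv)$ of the optimum (via the tree construction of \thmref{thm:minmaxprobs} and \cite{fromer2009lp}) rather than reasoning purely inside the compact polytope; the interpolation step that activates the exclusion constraint is precisely where hypothesis \eqref{eq:full_higher_than_excluded} is genuinely used.
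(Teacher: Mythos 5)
Your proof is correct, but it reaches the conclusion by a genuinely different route than the paper. The paper works dually: it introduces a relaxed polytope $\tilde{\mathcal{P}}(\muv)$ in which the $\xx$-consistent marginal constraints are weakened to inequalities with right-hand sides $\muv-\mathbf{I}_{\xx}$, shows $\mathcal{P}(\muv)\subseteq\tilde{\mathcal{P}}(\muv)$, identifies the dual of the relaxed problem over $U\setminus\xx$ with the compact LP carrying the box $\muvt\leq\muv-\mathbf{I}_{\xx}$, and sandwiches the two values; the activation $I(\xx;\muvt)=0$ is then obtained by complementary slackness (if the exclusion constraint were slack at every optimum, the multiplier $\delta_{\xx}$ would vanish and the dual certificate would bound the \emph{unexcluded} problem by $P_{U\setminus\xx}$, contradicting \eqref{eq:full_higher_than_excluded}). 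You instead argue primally: you marginalize an optimal distribution $p^*$ over $U\setminus\xx$ to get a left-optimal $\muvt^*$, obtain the tightened box directly from $p^*(\xx)\geq I(\xx;\muv)$ (the same fact from \cite{fromer2009lp} that the paper uses to show $\mathcal{P}\subseteq\tilde{\mathcal{P}}$), and activate the exclusion constraint by interpolating toward an optimizer of the unexcluded compact LP, using linearity of $Z$ and of the (un-ReLU'd) functional $\tilde I$ together with the intermediate value theorem. Both arguments use the strict inequality hypothesis at exactly the same point. Your version is more elementary and self-contained --- it avoids setting up the relaxed polytope, its dual, and the complementary-slackness bookkeeping --- while the paper's dual derivation produces the explicit certificate $\lambdav^*$ and the reformulation \eqref{eq:relaxed_dual_of_relaxed_primal} as intermediate objects. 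One small point to make explicit: the constraint written as $I(\xx;\muvt)=0$ must be read as the linear functional $\tilde I$ (without the ReLU), otherwise it would be vacuously equivalent to $I\leq 0$; your reading is the intended one and is consistent with how the lemma is invoked later, but it is worth stating.
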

\begin{lemma} \label{lem:cond_compactness_helper}
\eqref{eq:cond1} has the same optimal value as:
\begin{align} \label{eq:cond_dual_final}
\min ~ &\mu_{\xx} \\
\text{s.t. } & \muvt\in{\lclmargpoly^h}, 0 \leq \muvt \leq \tau_\mu\muv_h -  \mu_{\xx}\mathbb{I}_{\xx} \nonumber\\
& \muv_o\tau_\mu \geq 1 \nonumber \\
& \sum_{z_i}{\mut_i(z_i)} = \taut \quad \forall i\in{h} \nonumber\\
&\mu_{\xx} + \taut = 1\nonumber\\
& I(\xx_h ; \muvt) + (1-|P_h|)\taut \leq 0 \nonumber\\
& \tau_\mu I(\xx ; \muv) - \mu_{\xx} - I(\xx_h ; \muvt) + (|P_h| -1 )\taut \leq 0. \nonumber
\end{align}
\end{lemma}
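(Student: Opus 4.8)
The plan is to derive \eqref{eq:cond_dual_final} as the LP dual of \eqref{eq:cond1}, using the tree structure to collapse the exponentially many dual variables into the polynomially many pseudo-marginals $\muvt$. First I would rewrite \eqref{eq:cond1} in standard form by moving $\lambda_{\xx}$ to the left-hand side, so that the free primal variables are the tree-function pieces $\lambda_r,\lambda_{i,pa(i)},\lambda_i$ together with the scalar $\lambda_{\xx}$, and the constraints split into three families indexed by configurations: those with $\zz_o\neq\xx_o$, those with $\zz_o=\xx_o,\zz_h\neq\xx_h$, and the single configuration $\xx$, plus the scalar budget constraint $\lambda\cdot\muv\geq 0$. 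Taking the dual introduces a nonnegative weight $\alpha_\zz$ for each configuration in the first family, a weight $\beta_\zz$ for each in the second, a nonnegative scalar $\gamma$ for the $\xx$-constraint (whose right-hand side is the only nonzero one), and a nonnegative multiplier $\tau_\mu$ for the budget; the dual objective is $\min\gamma$, which I expect to identify with $\min\mu_{\xx}$ by setting $\gamma=\mu_{\xx}$.

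Next I would read off the equality constraints produced by the free primal variables. The column equation for $\lambda_{\xx}$ gives $\sum_{\zz:\zz_o=\xx_o,\zz_h\neq\xx_h}\beta_\zz+\gamma=1$; identifying the total slice mass $\sum_\zz\beta_\zz$ with $\taut$ and $\gamma$ with $\mu_{\xx}$ yields $\mu_{\xx}+\taut=1$, and marginalizing the $\beta$-weights over each hidden node gives $\sum_{z_i}\mut_i(z_i)=\taut$. The column equations for $\lambda_i(z_i)$ and $\lambda_{ij}(z_i,z_j)$ state that the marginals of the nonnegative weights match $\tau_\mu\muv$ on the observed part and are bounded by it elsewhere. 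Defining $\muvt$ as the marginals of the $\beta$-weights (which by construction exclude $\xx$) turns these into the membership $\muvt\in\lclmargpoly^h$; the full-marginal upper bound $\tau_\mu\muv_h$ becomes $\tau_\mu\muv_h-\mu_{\xx}\mathbb{I}_{\xx}$ once the contribution of the weight $\gamma$ on $\xx$ is removed, giving the box constraint $0\le\muvt\le\tau_\mu\muv_h-\mu_{\xx}\mathbb{I}_{\xx}$, while the $\zz_o\neq\xx_o$ family together with the budget produces $\muv_o\tau_\mu\ge 1$.

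The crucial compactification step is to argue that any polynomially specified $\muvt$ satisfying local consistency and the box constraint is \emph{realizable} as the marginals of some nonnegative weight vector over the slice $\zz_o=\xx_o$, and conversely. For this I would reuse the explicit tree product construction from the proof of \thmref{thm:minmaxprobs} (the equality of the local marginal polytope with the marginal polytope on trees), applied to the hidden subgraph; because the observed variables are frozen to $\xx_o$, the reparameterized function \eqref{eq:reparam} restricts to a tree function over the hidden nodes, which is exactly why the reduced polytope is $\lclmargpoly^h$ and why the hidden component count $|P_h|$ surfaces. The two $I$-inequalities then arise from translating the requirement that the excluded assignment $\xx$ receive no surplus mass into the functional \eqref{eq:Ixmu}: summing the reparameterized bound over the $|P_h|$ connected components of the hidden subgraph contributes the $(1-|P_h|)\taut$ correction, yielding $I(\xx_h;\muvt)+(1-|P_h|)\taut\le 0$, and pairing this with the global $\tau_\mu$-scaled budget gives the last inequality $\tau_\mu I(\xx;\muv)-\mu_{\xx}-I(\xx_h;\muvt)+(|P_h|-1)\taut\le 0$.

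The main obstacle will be the coefficient bookkeeping in this last step: tracking the degree weights $(1-d_i)$ from \eqref{eq:Ixmu}, the hidden-degree corrections $d_i^h$ that distinguish edges in $E_h$ from those in $E_{oh}$ and $E_o$, and the component count $|P_h|$ as the singleton and pairwise terms are collapsed on the frozen slice, while verifying that the realizability construction respects the strict separation of observed from hidden edges. I would also confirm that strong duality applies, so that the two optimal values genuinely coincide rather than merely bound one another; this uses that the primal underlying \eqref{eq:cond1} is feasible and bounded under the standing assumption $p(\xx_o)>0$ inherited from the Charnes--Cooper transformation.
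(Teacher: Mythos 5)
Your overall route is the mirror image of the paper's: you dualize \eqref{eq:cond1} first (getting one nonnegative weight per configuration) and then try to compactify the dual by replacing the weight vectors with their tree marginals, whereas the paper compactifies on the primal side first --- it rewrites the two exponential constraint families of \eqref{eq:cond1} as bounds on (2nd-best) MAP values, substitutes the polynomially-sized dual-decomposition certificates \eqref{eq:2ndbestlp} (which are tight on trees/forests), and only then takes the dual of the resulting compact LP. The mirror route is legitimate in principle, and your bookkeeping for the $\beta$-weights on the slice $\{\zz:\zz_o=\xx_o,\ \zz_h\neq\xx_h\}$ is on track: that is a forest-structured product domain minus a single assignment, so the single-exclusion machinery of \thmref{thm:minmaxprobs} characterizes its realizable marginals as $\lclmargpoly^h$ plus $I(\xx_h;\muvt)+(1-|P_h|)\taut\le 0$, which is exactly the fifth constraint.

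The genuine gap is in the $\alpha$-part. Working directly from \eqref{eq:cond1}, your $\alpha$-weights are supported on $\{\zz:\zz_o\neq\xx_o\}$, the \emph{complement of a product slice}. After eliminating them via the column equations, you must certify that every $(\muvt,\taut,\mu_{\xx},\tau_\mu)$ satisfying the compact constraints admits a nonnegative measure with marginals $\tau_\mu\muv-(\text{marginals of }\beta)-\mu_{\xx}\mathbb{I}_{\xx}$ supported on that complement. None of the tools in the paper characterize realizability for slice-complement supports; \thmref{thm:minmaxprobs} and \cite{fromer2009lp} only handle product domains and product domains minus one point. This is precisely why the paper first replaces the family ``$\zz_o\neq\xx_o$'' by ``$\zz\neq\xx$'' (valid since $\lambda_{\xx}\ge 0$), so that \emph{both} families become single-exclusion 2nd-best MAP constraints. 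Relatedly, your sketch does not actually produce the last constraint $\tau_\mu I(\xx;\muv)-\mu_{\xx}-I(\xx_h;\muvt)+(|P_h|-1)\taut\le 0$: ``pairing with the $\tau_\mu$-scaled budget'' would at best give the weaker condition $\tau_\mu \tilde{I}(\xx;\muv)-\mu_{\xx}\le 0$ (the residual measure avoids $\xx$), whereas the cross terms $-I(\xx_h;\muvt)+(|P_h|-1)\taut$ arise in the paper from eliminating the full-tree pseudo-marginals $\mub=\tau_\mu\muv-\muvt-\mu_{\xx}\mathbb{I}_{\xx}$ inside the constraint $\tilde{I}(\xx;\mub)\le 0$ contributed by the $\deltab_{\xx}$ multiplier, together with the identity $\sum_{i\in o}(d_i-1)-|E_o|=|P_h|-1$. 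Since this constraint is what ultimately forces $\mu_{\xx}\ge\tau_\mu I(\xx;\muv)$ and hence the value of the bound, omitting its exact form means your compact LP could have a strictly smaller optimum than \eqref{eq:cond1}, and the claimed equality of values would not follow.
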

The decision variables in in \eqref{eq:cond_dual_final} are $\muvt,\taut,\tau_{\mu},\mu_{\xx}$, where $\muvt$ are pseudo-marginals on hidden variables and pairs of them that are connected by an edge. This form is very similar to that of problems in \eqref{eq:maxprobs_compact}, and indeed their solutions are similar. Using \lemref{lem:cond_closed_form_helper}, we will show that a simple modification to the solution of the second problem in \eqref{eq:maxprobs_compact} leads to a solution of \eqref{eq:cond_dual_final}. This modification is shown in the following two lemmas, that also conclude the proof of \thmref{thm:struct_cond}. For now we assume the correctness of \lemref{lem:cond_compactness_helper} and \lemref{lem:cond_closed_form_helper}, their proofs are deferred to the end of this document.

To fit our problem into the formulation of \lemref{lem:cond_closed_form_helper}, define $U$ using $\bar{X}_i=\{ x_i \}$ for all observed variables $i\in{o}$ and $\bar{X}_j$ unrestricted for all hidden variables $j\in{h}$. Under this definition we have:
\begin{align*}
\max_{p\in{\linstatpoly{}}}{\sum_{\uu\in{U}}p(\uu)} &= \max_{p\in{\linstatpoly{}}}{p(\xx_o)},\\
\max_{p\in{\linstatpoly{}}}{\sum_{\uu\in{U\setminus \xx}}{p(\uu)}} &= \max_{p\in{\linstatpoly{}}}{\sum_{\zz_h\neq \xx_h}{p(\xx_o,\zz_h)}}.
\end{align*}
We are now ready to use the above lemmas and conclude the proof.
\begin{lemma} \label{lem:final1}
If $I(\xx\, ; \, \muv) \leq 0$ then \[ \min_{p\in{\mathcal{P}(\muv)}}{p(\xx_h ~ | ~ \xx_o)} = 0, \] unless $\max_{p\in{\mathcal{P}(\muv)}}{\sum_{\zz_h\neq \xx_h}{p(\zz_h, \xx_o)}} = 0$ and then the value is $1$.
\end{lemma}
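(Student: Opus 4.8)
The plan is to dispose of this edge case by reducing the conditional to the minimum and maximum joint probabilities already characterized, and then splitting on whether the ``exclusion mass'' vanishes.

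First I would record two consequences of the hypothesis. Since $I(\xx\,;\,\muv)=[\tilde I(\xx\,;\,\muv)]_+$ is nonnegative by construction, the assumption $I(\xx\,;\,\muv)\le 0$ forces $I(\xx\,;\,\muv)=0$; by the result of \cite{fromer2009lp} recalled in \eqref{eq:Ixmu} this gives $\min_{p\in\mathcal{P}(\muv)}p(\xx)=0$, so there is a feasible $p_0$ with $p_0(\xx)=0$. Second, I write the conditional as $p(\xx_h\mid\xx_o)=p(\xx)/p(\xx_o)$ (legitimate under the standing assumption $p(\xx_o)>0$ used for the Charnes--Cooper step) and note that it is always nonnegative, so to obtain value $0$ it suffices to exhibit one feasible $p$ with $p(\xx)=0$ and $p(\xx_o)>0$.

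Then I would case-split on $M:=\max_{p\in\mathcal{P}(\muv)}\sum_{\zz_h\neq\xx_h}p(\zz_h,\xx_o)$. If $M=0$, then since the objective is a sum of nonnegative terms every feasible $p$ has $\sum_{\zz_h\neq\xx_h}p(\zz_h,\xx_o)=0$; hence $p(\xx_o)=p(\xx)+\sum_{\zz_h\neq\xx_h}p(\zz_h,\xx_o)=p(\xx)$, and for every feasible $p$ with $p(\xx_o)>0$ the conditional equals $p(\xx)/p(\xx)=1$. This is exactly the ``unless'' clause. The substantive case is $M>0$, where I must produce a feasible $p$ with $p(\xx)=0$ and $p(\xx_o)>0$. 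Here I would invoke \thmref{thm:minmaxprobs} on the universe $U=\{\uu:u_i=x_i\ \forall i\in{o}\}$, so that $M=\max Z(\muvt)$ over $\muvt\in\lclmargpoly(U)$, $\muvt\le\muv$, $I(\xx\,;\,\muvt)\le 0$, attained by some $\muvt^{*}$ with $Z(\muvt^{*})=M>0$. The $U\setminus\xx$ reconstruction from \cite{fromer2009lp} used in that proof turns $\muvt^{*}$ into a nonnegative function on $U$ that marginalizes to $\muvt^{*}$, has total mass $M$, and—because $I(\xx\,;\,\muvt^{*})\le 0$ and the reconstructed value at $\xx$ is $I(\xx\,;\,\muvt^{*})$ forced to $0$ by nonnegativity—assigns $\xx$ zero mass. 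Reading off the block $U$ consistent with $\xx_o$ then gives $p(\xx)/p(\xx_o)=0/M=0$, so the minimum is $0$.

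The main obstacle is precisely this reconstruction step: passing from the compact optimizer $\muvt^{*}$ back to an honest member of $\mathcal{P}(\muv)$ while simultaneously keeping $p(\xx)=0$ and $p(\xx_o)=M>0$, and reconciling it with the Charnes--Cooper requirement $p(\xx_o)>0$ and with the ReLU gap between $I$ and $\tilde I$. If one prefers to avoid the explicit reconstruction, the alternative is purely algebraic: feed the hypotheses into the compact dual \eqref{eq:cond_dual_final} of \lemref{lem:cond_compactness_helper}. Setting $\mu_{\xx}=0$ collapses that program—using $I(\xx\,;\,\muv)=0$—to the single requirement $I(\xx_h\,;\,\muvt)=|P_h|-1$ together with $\taut=1$ and the box and normalization constraints; showing that $M>0$ renders this system feasible, and that $\mu_{\xx}\ge 0$ holds throughout so that $0$ is optimal, again delivers the value $0$ (with \lemref{lem:cond_closed_form_helper} available to justify the tightening of $\muvt\le\muv$ to $\muvt\le\muv-\mathbf{I}_{\xx}$). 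I would present the probabilistic construction as the main line and keep the dual computation as a consistency check.
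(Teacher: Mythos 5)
Your handling of the $M=0$ case and the observation that $I(\xx\,;\,\muv)\le 0$ forces $I(\xx\,;\,\muv)=0$ are both correct and agree with the paper. The genuine gap is exactly the obstacle you name and then leave open: when $M>0$, your main line requires a single $p\in\mathcal{P}(\muv)$ with $p(\xx)=0$ \emph{and} $p(\xx_o)>0$, and the reconstruction from \thmref{thm:minmaxprobs} does not produce one. That reconstruction yields a nonnegative function on $U$ whose marginals are $\muvt^{*}\le\muv$; it is not a member of $\mathcal{P}(\muv)$, and the duality step that transfers the compact optimum back to $\mathcal{P}(\muv)$ only guarantees that \emph{some} $p\in\mathcal{P}(\muv)$ attains $\sum_{\uu\in{U\setminus\xx}}p(\uu)=M$ --- not that such a maximizer also has $p(\xx)=0$. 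Your construction does close in the sub-case $\max_{p}\sum_{\uu\in{U}}p(\uu)=M$, since then any maximizer of the exclusion sum satisfies $p(\xx)=\sum_{\uu\in{U}}p(\uu)-\sum_{\uu\in{U\setminus\xx}}p(\uu)\le 0$; but in the complementary sub-case $\max_{p}\sum_{\uu\in{U}}p(\uu)>M$ the primal route is stuck (convex combinations of a minimizer of $p(\xx)$ with a maximizer of the exclusion sum do not drive the ratio to zero), and that is precisely the regime the paper must handle.

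The paper's actual proof is the route you relegate to a ``consistency check'': assume toward contradiction that the minimum conditional is positive, deduce $\max_{p}\sum_{\uu\in{U}}p(\uu)>\max_{p}\sum_{\uu\in{U\setminus\xx}}p(\uu)$ so that \lemref{lem:cond_closed_form_helper} applies, take an optimal $\muvt_1$ of the compact exclusion problem with $I(\xx\,;\,\muvt_1)=0$ exactly, rescale by $1/\taut_1$, and verify that $\mu_{\xx}=0$ is feasible for \eqref{eq:cond_dual_final}. The substantive content there --- which your sketch asserts but does not carry out --- is the counting identity $I(\xx\,;\,\muvt_1)=I(\xx_h\,;\,\muvt_{1,h})+(1-|P_h|)\taut_1$, obtained from $|E_o|=|o|-|P_o|$ and $|E_{oh}|=|P_h|+|P_o|-1$, which is what makes the two $I$-constraints of \eqref{eq:cond_dual_final} collapse to an equality satisfied by the rescaled solution, together with the choice of $\tau_\mu$ large enough to meet $\muv_o\tau_\mu\ge 1$ and the box constraints. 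Without either closing the extension gap in the primal construction or performing this feasibility verification, the argument is incomplete.
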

\begin{proof}
We assume that $p(\xx_o)$ is constrained to be larger than $0$, otherwise the robust conditional probability problem is ill-defined. So it is trivial that if
\[
\max_{p\in{\mathcal{P}(\muv)}}{\sum_{\zz_h\neq \xx_h}{p(\xx_o,\zz_h)}} =0,
\]
then $p(\xx)=p(\xx_o)$ and the conditional is $1$. \\ Now assume towards contradiction that $\min_{p\in{\mathcal{P}(\muv)}}{p(\xx_h ~ | ~ \xx_o)} > 0$, clearly we must have:
\[
\max_{p\in{\mathcal{P}(\muv)}}{\sum_{\uu\in{U}}{p(\uu)}} > \max_{p\in{\mathcal{P}(\muv)}}{\sum_{\uu\in{U\setminus \xx}}{p(\uu)}},
\]
because otherwise equality must hold, so a maximizing distribution of the right hand side will have to achieve a conditional probability of $0$. Then the conditions of \lemref{lem:cond_closed_form_helper} hold and we have:
\[
\max_{p\in{\mathcal{P}(\muv)}}{\sum_{\zz_h\neq \xx_h}{p(\xx_o,\zz_h)}} = \max_{\substack{\tilde{\muv}\in{\lclmargpoly(U)}, \tilde{\muv}\leq\muv \\ I(\xx \,;\, \tilde{\muv}) = 0}}~  Z(\muvt).
\]

Denote the value of the above problems as $\taut_1 > 0$, let $\muvt_1$ be an optimal solution to the problem on the right hand side and $\muvt_{1,h}$ its sub-vector that corresponds to hidden variables and edges between them.
Consider taking $\muvt = \frac{\muvt_{1,h}}{\taut_1}, \taut = 1, \mu_{\xx} = 0$, we will show there exists a value of $\tau_\mu$ such that $\muvt,\taut,\mu_{\xx},\tau_{\mu}$ is a feasible solution to \eqref{eq:cond_dual_final}. The value of this solution is $\mu_{\xx}=0$, which contradicts the assumption that the minimum is strictly positive and concludes the proof. 

To see such a value of $\tau_\mu$ exists, note the following three points:
\begin{itemize}
	\item  $\muvt_1\in{\lclmargpoly(U)}, \muvt_1\leq\muv$ and normalizes to $\taut_1$. So it also holds that $\muvt \in{\lclmargpoly}, \muvt\leq \taut_1^{-1}\muv_h$, hence the first constraint of \eqref{eq:cond_dual_final} is satisfied for any $\tau_\mu \geq \taut_1^{-1}$. Also from these results it is straightforward to see that the third and fourth constraints are satisfied.
	\item Because we enforced $p(\xx_o)>0$, it holds that $\muv_o>0$. Thus the second constraint of \eqref{eq:cond_dual_final} can also be satisfied if we take a large enough value for $\tau_\mu$ (i.e. larger than one over the minimal item in $\muv_o$).
	\item Finally, we will show that
	\begin{align} \label{eq:Ieqzero}
	I(\xx_h ; \muvt) + (1-|P_h|)\taut = 0.
	\end{align}
	This means the fifth constraint is satisfied and more importantly, because $I(\xx;\muv)\leq 0$, the last constraint is satisfied for any positive value of $\tau_\mu$.\\
	To show that \eqref{eq:Ieqzero} holds, notice that:
\begin{align*} I(\xx;\muvt_1)&=0, &&\\
\mut_{1,i}(x_i)&=\taut_1 & & \forall i\in{o},\\
\mut_{1,ij}(x_i,x_j)&=\taut_1 & & \forall (i,j)\in{E_o}, \\
\mut_{1,ij}(x_i,z_j)&=\mut_{1,j}(z_j) & &  \forall (i,j)\in{E_{oh}}, z_j.
\end{align*}
	This first equality holds because it is a constraint in the problem that $\mut_1$ solves, the others because observed variables have only one possible value in $U$ and $\mut_1\in{\lclmargpoly (U)}$. Let us write down $I(\xx;\mut_1)$ and decompose the sums in its expression into smaller ones over observed and hidden variables, and to different types of edges:
	\begin{align*}
	I(\xx;\muvt_1) &= \sum_{i}{(1-d_i)\mu_i(x_i)} + \sum_{ij\in{E}}{\mu_{ij}(x_i,x_j)} \\
			    &= \sum_{i\in{o}}{(1-d_i)\taut_1} + \sum_{i\in{h}}{(1-d_i)\mut_{1,i}(x_i)} + \sum_{ij\in{E_h}}{\mut_{1,ij}(x_i,x_j)} \\
			    & + \sum_{ij\in{E_{oh}}}{\mut_{1,j}(x_j)} + \sum_{ij\in{E_o} }{\taut_1} \\
			    & = 0
	\end{align*}
	Since the subgraph of observed nodes is a forest, it has $|E_o| = |o|-|P_o|$ edges. Furthermore, $\sum_{i\in{o}}{d_i} = |E_{oh}| + 2|E_o|$ so we can rewrite the above expression as:
	\begin{align*}
	I(\xx;\muvt_1) = (|P_o| - |E_{oh}|)\taut_1 + \sum_{i\in{h}}{(1-d^h_i)\mut_{1,i}(x_i)} + \sum_{ij\in{E_h}}{\mut_{1,ij}(x_i,x_j)}.
	\end{align*}
	Notice we also combined the summation over $ij\in{E_{oh}}$ to that over $i\in{h}$, changing $d_i$ to $d_i^h$. The entire graph being a tree, it must also hold that $|E_{oh}|=|P_h|+|P_o| -1$. Plugging this into our expression, we get:
	\begin{align*}
	I(\xx;\muvt_1) = I(\xx_h;\muvt_{1,h}) + (1-|P_h|)\taut_1=0.
	\end{align*}
	Now because of the way we set $\mut$, we arrive at:
	\begin{align*}
	\frac{I(\xx;\muvt_1)}{\taut_1} = I(\xx_h;\muvt) + (1-|P_h|)\taut=0,
	\end{align*}
	which gives \eqref{eq:Ieqzero}.
\end{itemize}
Combining the items above, we see that taking $\tau_\mu$ larger than $\taut_1^{-1}$ and all entries of $\mu^{-1}_o$, gives a feasible solution as required.
\end{proof}
\begin{lemma}
If $I(\xx; \muv)>0$ then $\min_{p\in{\mathcal{P}(\muv)}}{p(\xx_h ~ | ~ \xx_o)} = \frac{I(\xx ; \muv)}{I(\xx ; \muv) + \max_{p\in{\mathcal{P}(\muv)}}{\sum_{\zz_h\neq \xx_h}{p(\zz_h, \xx_o)}}}$.
\end{lemma}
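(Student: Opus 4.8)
The plan is to prove the two inequalities separately: the lower bound follows from a one-line monotonicity argument, and the upper bound from an explicit feasible point of the compact dual. Write $I := I(\xx;\muv) > 0$ and $M := \max_{p\in\mathcal{P}(\muv)}\sum_{\zz_h\neq\xx_h}p(\zz_h,\xx_o)$. Recall from \cite{fromer2009lp} that $I = \min_{p\in\mathcal{P}(\muv)}p(\xx)$ (the ReLU in \eqref{eq:Ixmu} is inactive since $I>0$), so for \emph{every} feasible $p$ we have $p(\xx)\ge I$ and $\sum_{\zz_h\neq\xx_h}p(\xx_o,\zz_h)\le M$ at once. Since $p(\xx_h\mid\xx_o)=p(\xx)/\bigl(p(\xx)+\sum_{\zz_h\neq\xx_h}p(\xx_o,\zz_h)\bigr)$, and $a\mapsto a/(a+b)$ is increasing in $a\ge 0$ while $b\mapsto I/(I+b)$ is decreasing, one gets $p(\xx_h\mid\xx_o)\ge I/(I+M)$ for every feasible $p$; here $I>0$ keeps the denominators positive. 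This already yields $\min_{p}p(\xx_h\mid\xx_o)\ge I/(I+M)$.

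For the reverse inequality I would exhibit a feasible point of the compact LP \eqref{eq:cond_dual_final}, whose optimum $\min\mu_{\xx}$ equals $\min_p p(\xx_h\mid\xx_o)$ by the Charnes--Cooper duality behind \eqref{eq:cond1} together with \lemref{lem:cond_compactness_helper}. Because $I>0$, the lower bound above already shows the conditional is strictly positive, which (as in the proof of \lemref{lem:final1}) forces the strict inequality \eqref{eq:full_higher_than_excluded}; hence \lemref{lem:cond_closed_form_helper} and \thmref{thm:minmaxprobs} supply a maximiser $\muvt_1\in\lclmargpoly(U)$ with $Z(\muvt_1)=M$, $I(\xx;\muvt_1)=0$, and $\muvt_1\le\muv-\mathbf{I}_{\xx}$. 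Reusing the degree/forest counting carried out inside \lemref{lem:final1} (the identities $|E_o|=|o|-|P_o|$ and $|E_{oh}|=|P_h|+|P_o|-1$), the constraint $I(\xx;\muvt_1)=0$ becomes $I(\xx_h;\muvt_{1,h})=(|P_h|-1)M$, where $\muvt_{1,h}$ is the hidden block of $\muvt_1$.

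The candidate point is then $\taut=M/(I+M)$, $\mu_{\xx}=I/(I+M)$, $\muvt=\muvt_{1,h}/(I+M)$, and $\tau_\mu=1/(I+M)$. The normalisation constraints $\mu_{\xx}+\taut=1$ and $\sum_{z_i}\mut_i(z_i)=\taut$ hold by construction; the domination $\muvt\le\tau_\mu\muv_h-\mu_{\xx}\mathbb{I}_{\xx}$ reduces, after clearing the factor $1/(I+M)$, to the hidden restriction of $\muvt_1\le\muv-\mathbf{I}_{\xx}$; and both $I$-constraints collapse to equalities once $I(\xx_h;\muvt_{1,h})=(|P_h|-1)M$ is substituted. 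The only delicate constraint is $\tau_\mu\muv_o\ge 1$, i.e.\ $\min(\muv_o)\ge I+M$, which I would establish by noting that the exclusion optimiser $p^*$ satisfies $p^*(\xx_o)=p^*(\xx)+M\ge I+M$, while every entry of $\muv_o$ (being a marginal that dominates $p(\xx_o)$ for any feasible $p$) is at least $\max_p p(\xx_o)\ge I+M$. This makes the candidate feasible with objective $\mu_{\xx}=I/(I+M)$, giving $\min_p p(\xx_h\mid\xx_o)\le I/(I+M)$ and, with the lower bound, the claimed equality.

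I expect the upper bound to be the main obstacle, since the lower bound is essentially free. The subtle points there are verifying $\min(\muv_o)\ge I+M$ — this is exactly what lets $\tau_\mu=1/(I+M)$ satisfy $\tau_\mu\muv_o\ge 1$ without conflicting with the binding last constraint $\tau_\mu I\le\mu_{\xx}$ — and checking that the hidden restriction of the domination bound from \lemref{lem:cond_closed_form_helper} survives the rescaling by $1/(I+M)$. The remaining effort is bookkeeping inherited from \lemref{lem:final1}: keeping the counts of hidden, observed, and crossing edges consistent so that the two $I$-constraints cancel exactly.
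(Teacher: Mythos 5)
Your proposal is correct and follows essentially the same route as the paper: the same candidate point $\tau_\mu=1/(I+M)$, $\muvt=\muvt_{1,h}/(I+M)$, $\taut=M/(I+M)$, $\mu_{\xx}=I/(I+M)$ built from the optimizer supplied by \lemref{lem:cond_closed_form_helper}, with the same reuse of the edge-counting identity $I(\xx;\muvt_1)=I(\xx_h;\muvt_{1,h})+(1-|P_h|)\taut_1$ from \lemref{lem:final1}. The only (harmless) local deviations are that you spell out the monotonicity argument for the lower bound, which the paper dismisses as obvious, and that you verify $\tau_\mu\muv_o\ge 1$ directly from $\mu_i(x_i)\ge\max_p p(\xx_o)\ge I+M$ rather than from the observed-variable components of $\muvt_1$ satisfying $\taut_1\le\mu_i(x_i)-I$; both give the same bound.
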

\begin{proof}
Obviously the right hand side is a lower bound on the minimum, we need to show there is a feasible solution that gives this bound. When $I(\xx; \muv) > 0$ it is easy to see that the conditions of \lemref{lem:cond_closed_form_helper} hold. So defining $\muvt_1, \taut_1$ as we did in the proof of \lemref{lem:final1}, we can assume $\muvt_1 \leq \muv - \mathbf{I}_{\xx}, I(\xx_h ; \muvt_1) + (1-|P_h|) = 0$. Now consider setting:
\begin{align*}
\tau_\mu = \frac{1}{I(\xx, \muv) + \taut_1}, ~ \muvt = \muvt_{1,h}\tau_\mu, ~ \taut = \taut_1\tau_\mu, ~ \mu_{\xx} = I(\xx; \muv)\tau_\mu.
\end{align*}
Since $\taut_1$ is defined as the value of the maximization problem in the denominator of the bound stated in the lemma, it can be seen that the value of $\mu_{\xx}$ is equal to this bound. So if this solution is feasible for \eqref{eq:cond_dual_final}, $\mu_{\xx}$ is also an upper bound on the robust conditional probability and it must also be the optimal value. We will simply go through each constraint in \eqref{eq:cond_dual_final} and show this solution satisfies it:
\begin{itemize}
	\item $\muvt\in{\lclmargpoly^h}, 0 \leq \muvt \leq \tau_\mu\muv_h -  \mu_{\xx}\mathbb{I}_{\xx_h}$: since $\muvt_1\in{\lclmargpoly (U)}$ and linear constraints stay satisfied after multiplying all variables by a positive scalar, we have $\muvt\in{\lclmargpoly^h}$. Satisfaction of capacity constraints is also a direct consequence of $\mut_1$ satisfying capacity constraints: $\muvt = \muvt_{1,h}\tau_\mu \leq (\muv_h - \mathbf{I}_{\xx})\tau_\mu = \tau_\mu\muv_h - \mu_{\xx}\mathbb{I}_{\xx_h}$.
	\item $\mu_i(x_i)\tau_\mu \geq 1 \quad \forall i\in{o}, ~ \mu_{ij}(x_i,x_j)\tau_\mu \geq 1  \quad \forall ij\in{E_o}$: Notice that $\muvt_1$ also has components for observed variables $i\in{o}$ that satisfy $\taut_1 = \mut_{1,i}(x_i)\leq \mu_i(x_i) - I(\xx ; \muv)$ and $\taut_1 = \mut_{1,ij}(x_i,x_j)\leq \mu_{ij}(x_i,x_j) - I(\xx ; \muv)$ for $ij\in{E_o}$. This gives us the constraints easily:
	\begin{align*}
	\taut_1 + I(\xx ; \muv) = \frac{1}{\tau_\mu} \leq \mu_i(x_i) \quad \forall i\in{o},
	\end{align*}
	and the same holds for every $ij\in{E_o}$.
	\item $\sum_{z_i}{\mut_i(z_i)} = \taut \quad \forall i\in{h}, \mu_{\xx} + \taut = 1$: Easy to see from our setting of $\muvt,\taut,\mu_{\xx}$, because $\muvt_1$ normalizes to $\taut_1$.
	\item $ I(\xx_h ; \muvt) + (1-|P_h|)\taut \leq 0,
 \tau_\mu I(\xx ; \muv) - \mu_{\xx} - I(\xx_h ; \muvt) + (|P_h| -1 )\taut \leq 0$: Using $I(\xx_h;\muvt) + (1-|P_h|)\taut=0$ (this was proved in the proof of \lemref{lem:final1}) and because we set $\mu_{\xx} = I(\xx; \muv)\tau_\mu$, it is easy to confirm these two constraints are satisfied.
\end{itemize}
\end{proof}
We are left with the task of proving \lemref{lem:cond_compactness_helper} and \lemref{lem:cond_closed_form_helper}, this is the topic of the next section.

\subsection{Proofs of \lemref{lem:cond_compactness_helper} and  \lemref{lem:cond_closed_form_helper}}

The problem we are concerned with, \eqref{eq:cond1}, has an exponential number of constraints. We will see shortly that these constraints can be treated as constraints on the value of 2nd-best MAP problems \cite{fromer2009lp}, one over the tree shaped field $\lambda(\zz)$ and the other over the forest shaped $\lambda(\zz_h,\xx_o)$. To prove our results we will use a relaxation of these problems. Specifically, we will use the tightness of this relaxation in trees and forests to switch these constraints with a polynomially sized set, that is easier to handle analytically. Hence we turn to derive the set of linear constraints, this is done in a very similar manner to the derivation in \cite{globerson2008fixing}.

\subsubsection{Second Best MAP using Dual Decomposition}
As proved by the authors in \cite{fromer2009lp}, the 2nd-best MAP problem over a field $\lambda(\zz)$, with excluded assignment $\xx$ can be written as follows:
\begin{align*}
\max_{\tilde{\muv}} ~ & \lambdav \cdot \tilde{\muv} \\
\text{s.t. } & \tilde{\muv}\in{\lclmargpoly}, \tilde{I}(\xx\,;\,\tilde{\muv})\leq |P|-1,
\end{align*}
where $|P|$ is the number of connected components. This is in fact a relaxation of the 2nd-best MAP problem, but it is exact when the graph is a tree or a forest. 
The dual of this problem is:
\begin{align*}
\min_{\deltav,\delta_{\xx}} ~ & \sum_i{\delta_i + \sum_{ij}\delta_{ij}}  + (|P|-1)\delta_{\xx}\\
\text{s.t. } & \lambda_{i}(z_i) + \sum_{j}{\delta_{ji}(z_i)} + (d_i-1)\delta_{\xx}\mathbb{I}_{z_i=x_i} \leq \delta_i  \quad \forall i,z_i \\
& \lambda_{ij}(z_i,z_j) - \delta_{ji}(z_i) - \delta_{ij}(z_j) -\delta_{\xx}\mathbb{I}_{z_i,z_j=x_i,x_j} \leq \delta_{ij} \quad \forall ij,(z_i,z_j) \\
& \delta_{\xx} \geq 0
\end{align*}
At the optimum, $\delta_i,\delta_{ij}$ will just be equal to the maximum of the left hand side over different values of $z_i,z_j$ (since the problem is a minimization problem), hence we can solve:
\begin{align*}
\min_{\deltav, \delta_{\xx}\geq 0}\sum_{i} & \max_{z_i}\left\{\lambda_{i}(z_i) + \sum_{j}{\delta_{ji}(z_i)} + (d_i-1)\delta_{\xx}\mathbb{I}_{z_i=x_i}\right\} + \\ & \sum_{ij}{\max_{z_i,z_j}\Bigg\{\lambda_{ij}(z_i,z_j) - \delta_{ji}(z_i) - \delta_{ij}(z_j) -\delta_{\xx}\mathbb{I}_{z_i,z_j=x_i,x_j}\Bigg\} } + (|P|-1) \delta_{\xx}
\end{align*}
To formulate a set of linear constraints that are satisfied if and only if this MAP value is smaller than a constant $c$, we can use auxiliary variables and a polynomial number of constraints, as done in \cite{DBLP:conf/icml/EbanMG14}:
\begin{align} \label{eq:2ndbestlp}
 & \sum_i{\alpha_i + \sum_{ij}\alpha_{ij}} + (|P|-1)\delta_{\xx}\leq c \\
 & \lambda_{i}(z_i) + \sum_{j}{\delta_{ji}(z_i)} + (d_i-1)\delta_{\xx}\mathbb{I}_{z_i=x_i} \leq \alpha_i  \quad \forall i,z_i \nonumber \\
& \lambda_{ij}(z_i,z_j) - \delta_{ji}(z_i) - \delta_{ij}(z_j) -\delta_{\xx}\mathbb{I}_{z_i,z_j=x_i,x_j} \leq \alpha_{ij} \quad \forall ij,(z_i,z_j) \nonumber \\
& \delta_{\xx} \geq 0.\nonumber
\end{align}
In the next section we will place these constraints in \eqref{eq:cond1} and move back to its own dual, after some manipulation this will give us \lemref{lem:cond_compactness_helper}.

\subsubsection{Concluding the Proofs}
\begin{proof}[Proof of \lemref{lem:cond_compactness_helper}]
Consider \eqref{eq:cond1}. Because we know that the optimal value of $\lambda_{\xx}$ is in the segment $[0,1]$, this problem can be written as:
\begin{align} \label{eq:cond_dual}
\max ~ & \lambda_{\xx} \\
\text{s.t. } & \max_{\zz \neq \xx}{\lambda(\zz) \leq 0} \nonumber\\
& \max_{\zz_h\neq \xx_h,\zz_o = \xx_o}{\lambda(\zz) \leq -\lambda_{\xx}} \nonumber\\
&\lambda(x_r) + \sum_{i\neq r}{\lambda_{i,pa(i)}(x_i,x_{pa(i)}) + \lambda_{i}(x_{i})} \leq 1-\lambda_{\xx} \nonumber \\
&\lambda \cdot \muv \geq 0 \nonumber.
\end{align}

%
%
%

Begin by writing the full dual problem, where we plug the liner constraints described in \eqref{eq:2ndbestlp} instead of the first two constraints in \eqref{eq:cond_dual}. The first $4$ constraints are received by replacing the first 2nd-best MAP in \eqref{eq:cond_dual}, while the $4$ constraints after these are for the second 2nd-best MAP in \eqref{eq:cond_dual}. On the right hand side we assign dual variables to each of the constraints:
\begin{align*}
&\max ~ \lambda_{\xx} \\
&\begin{array}{l | r}
\text{s.t. } \sum_i{\alpha_i + \sum_{ij}\alpha_{ij}} \leq 0 & \taub\\
 \lambda_{i}(z_i) + \sum_{j}{\deltab_{ji}(z_i)} + (d_i-1)\deltab_{\xx}\mathbb{I}_{z_i=x_i} \leq \alpha_i  \quad \forall i,z_i & \mub_i(z_i)\\
\lambda_{ij}(z_i,z_j) - \deltab_{ji}(z_i) - \deltab_{ij}(z_j) -\deltab_{\xx}\mathbb{I}_{z_i,z_j = x_i,x_j} \leq \alpha_{ij} \quad \forall ij,(z_i,z_j) & \mub_{ij}(z_i,z_j)\\
\deltab_{\xx} \geq 0 &\\
\sum_{i\in{h}}{\beta_i + \sum_{ij\in{E_h}}\beta_{ij}} + (|P_h|-1)\deltat_{\xx} \leq -\lambda_{\xx} - \sum_{ij\in{E_o}}{\lambda_{ij}(x_i,x_j)} - \sum_{i\in{o}}{\lambda_i(x_i)} & \taut \\
\lambda_{i}(z_i) + \sum_{j\in{o}}{\lambda_{ji}(x_j,z_i)} + \sum_{j\in{h}}{\deltat_{ji}(z_i)} + (d^h_i-1)\deltat_{\xx}\mathbb{I}_{z_i=x_i} \leq \beta_i  \quad \forall i\in{h},z_i & \mut_i(z_i)\\
\lambda_{ij}(z_i,z_j) - \deltat_{ji}(z_i) - \deltat_{ij}(z_j) -\deltat_{\xx}\mathbb{I}_{z_i,z_j=x_i,x_j} \leq \beta_{ij} \quad \forall ij\in{E_h},(z_i,z_j) & \mut_{ij}(z_i,z_j)\\
\deltat_{\xx} \geq 0 &\\
\lambda_r(x_r) + \sum_{i\neq r}{\lambda_{i,pa(i)}(x_i,x_{pa(i)}) + \lambda_{i}(x_{i})} \leq 1-\lambda_{\xx} & \mu_{\xx}\\
\lambda \cdot \muv \geq 0 &  \tau_\mu
\end{array}
\end{align*}
Because we assume $(V,E)$ is connected, the coefficient of $\deltab_{\xx}$ in the first constraint is $0$ and this variable does not appear in the constraint. Yet the subgraph of hidden variables might not be connected. Recall we denoted its number of connected components by $|P_h|$, this explains the coefficient of $\deltab_{\xx}$ in the fifth consraint. Now we take the dual of the above and get the problem:
\begin{align*}
&\min ~ \mu_{\xx} \\
&\begin{array}{l | r}
\text{s.t. } \mu_{\xx} + \taut = 1 & \lambda_{\xx}\\
 \mub_{i}(z_i) + \mut_{i}(z_i) - \mu_i(z_i)\tau_\mu + \mathbb{I}_{z_i=x_i}\mu_{\xx} = 0  \quad \forall i\in{h},z_i & \lambda_i(z_i), i\in{h}\\
\mub_{ij}(z_i,z_j) + \mut_{ij}(z_i,z_j) - \mu_{ij}(z_i,z_j)\tau_\mu + \mathbb{I}_{z_i,z_j=x_i,x_j}\mu_{\xx} = 0  \quad \forall ij\in{E_h},(z_i,z_j) & \lambda_{ij}(z_i,z_j) \\ 
\mub_{i}(z_i) + \mathbb{I}_{z_i=x_i}(\taut + \mu_{\xx}) - \mu_i(z_i)\tau_\mu = 0 \quad \forall i\in{o},z_i & \lambda_i(z_i), i\in{o} \\
\mub_{ij}(z_i,z_j) + \mathbb{I}_{z_i,z_j=x_i,x_j}(\taut + \mu_{\xx}) - \mu_{ij}(z_i,z_j)\tau_\mu = 0  \quad \forall ij\in{E_o},(z_i,z_j) & \lambda_{ij}(z_i,z_j) \\ 
\mub_{ij}(z_i,z_j) + \mathbb{I}_{z_j=x_j}(\mut_i(z_i) + \mathbb{I}_{z_i=x_i}\mu_{\xx}) - \mu_{ij}(z_i,z_j)\tau_\mu = 0  \quad \forall ij\in{E_{ho}},(z_i,z_j) & \lambda_{ij}(z_i,z_j)\\ 
\sum_{z_j}{\mub_{ij}(z_i,z_j)} = \mub_i(z_i) \quad \forall ij\in{E}, z_i & \deltab_{ji}(z_i)\\
\sum_{z_j}{\mut_{ij}(z_i,z_j)} = \mut_i(z_i) \quad \forall ij\in{E_h}, z_i & \deltat_{ji}(z_i)\\
\sum_{z_i}{\mub_i(z_i)} = \taub \quad \forall i  & \alpha_i \\
\sum_{z_i}{\mut_i(z_i)} = \taut \quad \forall i  & \beta_i \\
\sum_{i}{(1-d_i)\mub_i(x_i) } + \sum_{ij}{\mub_{ij}(x_i,x_j)} \leq 0  & \deltab_{\xx} \\
\sum_{i}{(1-d^h_i)\mut_i(x_i) } + \sum_{ij}{\mut_{ij}(x_i,x_j)} + (1-|P_h|)\taut \leq 0  & \deltat_{\xx} \\
\end{array}
\end{align*}
All variables in the problem are constrained to be non negative as well. The right column denotes the primal variables that each dual constraint corresponds to, in the third row these variables are $\lambda_{ij}$ for $ij\in{E_h}$, while in the fifth and sixth they are for $ij\in{E_o}$ and $E_{ho}$ respectively. Notice that we can simplify the problem by using the second to sixth equality constraints and eliminate variables $\mub$. Local consistency constraints for $\mub$:
\begin{align*}
\sum_{z_j}{\mub_{ij}(z_i,z_j)} &= \mub_i(z_i) \quad \forall ij\in{E}, z_i, \\
\end{align*}
will be satisfied because of $\mut$ and $\mu$'s local consistency, while normalization constraints:
\begin{align*}
\sum_{z_i}{\mub_{i}(z_i)} = \taub \quad \forall i,
\end{align*}
 are also satisfied because $\mut$ normalizes to $\taut$. Combining the above switch of variables into the constraint $\tilde{I}(\xx \,;\, \mub) \leq 0$, it becomes:
\begin{align*}
\tau_\mu \tilde{I}(\xx \,;\, \muv) -  \mu_{\xx} - \tilde{I}(\xx_h \,;\, \muvt) + (\sum_{i\in{o}}{(d_i-1)} - |E_o| )\taut \leq 0.
\end{align*}
We already showed in the proof of \lemref{lem:final1} that the term $\sum_{i\in{o}}{(d_i-1)} - |E_o|$ is equal to $|P_h| - 1$, turning the above constraint to:
\begin{align*}
\tau_\mu \tilde{I}(\xx \,;\, \muv) -  \mu_{\xx} - \tilde{I}(\xx_h \,;\, \muvt) + (|P_h| -1 )\taut \leq 0.
\end{align*}
So we end up with the following problem:
\begin{align*}
\min ~ &\mu_{\xx} \\
\text{s.t. } &\mu_{\xx} + \taut = 1\\
& \mut_{i}(z_i) - \mu_i(z_i)\tau_\mu + \mathbb{I}_{z_i=x_i}\mu_{\xx} \leq 0  \quad \forall i\in{h},z_i \\
& \mut_{ij}(z_i,z_j) - \mu_{ij}(z_i,z_j)\tau_\mu + \mathbb{I}_{z_i,z_j=x_i,x_j}\mu_{\xx} \leq 0  \quad \forall ij\in{E_h},(z_i,z_j)\\
& \mu_i(x_i)\tau_\mu \geq 1 \quad \forall i\in{o} \\
& \mu_{ij}(x_i,x_j)\tau_\mu \geq 1  \quad \forall ij\in{E_o} \\
& \mut_i(z_i) + \mathbb{I}_{z_i=x_i}\mu_{\xx} - \mu_{ij}(z_i,x_j)\tau_\mu \leq 0  \quad \forall ij\in{E_{ho}} \\
& \sum_{z_j}{\mut_{ij}(z_i,z_j)} = \mut_i(z_i) \quad \forall ij\in{E_h}, z_i\\
& \sum_{z_i}{\mut_i(z_i)} = \taut \quad \forall i\in{h} \\
& \tau_\mu I(\xx \,;\, \muv) -  \mu_{\xx} - I(\xx_h \,;\, \muvt) + (|P_h| -1 )\taut \leq 0 \\
& I(\xx_h ; \muvt) + (1-|P_h|)\taut \leq 0 \\
\end{align*}
Simplifying notation using the vectors $\muv_h,\mathbb{I}_{\xx}, \muv_o$ that we defined in \secref{sec:notations}, the problem takes the shape of \eqref{eq:cond_dual_final}
\end{proof}

\begin{proof}[Proof of \lemref{lem:cond_compactness_helper}]
From \thmref{thm:minmaxprobs} we know that:
\begin{align*}
\max_{\substack{\tilde{\muv}\in{\lclmargpoly(U)}, \tilde{\muv}\leq\muv}}~  Z(\muvt) &= \max_{p\in{\mathcal{P}(\muv)}}{\sum_{\uu\in{U}}{p(\uu)}}, \\
\max_{\substack{\tilde{\muv}\in{\lclmargpoly(U)}, \tilde{\muv}\leq\muv \\ I(\xx \,;\, \tilde{\muv}) \leq 0}}~  Z(\muvt) &= \max_{p\in{\mathcal{P}(\muv)}}{\sum_{\uu\in{U\setminus \xx}}{p(\uu)}}.
\end{align*}
Now for each $i,(i,j)\in{E}$, consider replacing constraints in $\mathcal{P}(\muv)$ as follows:
\begin{align*}
\sum_{\zz:z_i,z_j=x_i,x_j}{p(\zz)} &= \mu_{ij}(x_i,x_j) \rightarrow \sum_{\substack{ \zz:z_i,z_j=x_i,x_j, \\ \zz \neq \xx}}{p(\zz)} \leq \mu_{ij}(x_i,x_j) - I(\xx,\muv), \\
\sum_{\zz:z_i=x_i}{p(\zz)} &= \mu_{i}(x_i) \rightarrow \sum_{\substack{ \zz:z_i=x_i \\ \zz \neq \xx}}{p(\zz)} \leq \mu_{i}(x_i) - I(\xx,\muv).
\end{align*}
We will denote this set by $\tilde{\mathcal{P}}(\muv)$. Since for any $p\in{\linstatpoly{}}$ we know that $p(\xx) \geq I(\xx,\muv)$, it holds that $\mathcal{P}(\muv)\subseteq \tilde{\mathcal{P}}(\muv)$, which means the maximum of the new problem is \emph{higher} than that of the original for both problems (on $U$ and $U\setminus \xx$):
\begin{align*}
\max_{p\in{\mathcal{P}(\muv)}}{\sum_{\uu\in{U}}{p(\uu)}} &\leq \max_{p\in{\tilde{\mathcal{P}}(\muv)}}{\sum_{\uu\in{U}}{p(\uu)}} \\
\max_{p\in{\mathcal{P}(\muv)}}{\sum_{\uu\in{U\setminus \xx}}{p(\uu)}} &\leq \max_{p\in{\tilde{\mathcal{P}}(\muv)}}{\sum_{\uu\in{U\setminus \xx}}{p(\uu)}}
\end{align*}
Taking the dual of this new problem on $U\setminus \xx$ we obtain:
\begin{align*}
\min_{\lambdav} ~& \lambdav\cdot (\muv - \mathbf{I}_{\xx}) && \\
\text{s.t. } & \lambda(\zz) \geq 1 & & \forall \zz\in{U\setminus \xx} \\
&\lambda(\zz) \geq 0 & & \forall \zz\notin{U} \\
&\lambda_{ij}(x_i,x_j) \geq 0, \lambda_{i}(x_i) \geq 0 & & \forall i\in{V}, (i,j)\in{E}
\end{align*}
From the result in \cororef{cor:inequality_LP}, we can consider the variables to be non-negative (i.e. $\lambdav \geq 0$), the second constraint is redundant and can be removed. Furthermore, the first constraint is in fact a constraint on the value of the $2$nd-best MAP problem on $-\lambda(\zz)$ (i.e. minimization of $\lambda(\zz)$ while excluding $\xx$). Adapting the constraints in \eqref{eq:2ndbestlp} to a minimization problem and switching into our problem we get:
\begin{align} \label{eq:relaxed_dual_of_relaxed_primal}
\min_{\lambdav\geq 0, \delta_{\xx}\geq 0, \boldsymbol{\alpha},\deltav} & \lambdav\cdot (\muv - \mathbf{I}_{\xx}) \\
\text{s.t. } & \sum_i{\alpha_i + \sum_{ij}\alpha_{ij}} \geq 1 \nonumber \\
& \lambda_{i}(z_i) + \sum_{j}{\delta_{ji}(z_i)} + (1-d_i)\delta_{\xx} \mathbb{I}_{z_i=x_i} \geq \alpha_i  \quad \forall i,z_i\in{\bar{X}_i} \nonumber \\
& \lambda_{ij}(z_i,z_j) - \delta_{ji}(z_i) - \delta_{ij}(z_j) +\delta_{\xx}\mathbb{I}_{z_i,z_j=x_i,x_j} \geq \alpha_{ij} \quad \forall ij,(z_i,z_j)\in{\bar{X}_i \times \bar{X}_j}. \nonumber
\end{align}
Taking the dual of this problem, it is easy to see it equals to:
\begin{align*}
\max_{\substack{\tilde{\muv}\in{\lclmargpoly(U)}, \tilde{\muv}\leq\muv - \mathbf{I}_{\xx} \\ I(\xx \,;\, \tilde{\muv}) \leq 0}}~  Z(\muvt).
\end{align*}
The constraints of this problem are more strict than the ones in the original, therefore its value is \emph{lower}:
\begin{align*}
\max_{p\in{\mathcal{P}(\muv)}}{\sum_{\uu\in{U\setminus \xx}}{p(\uu)}} = \max_{\substack{\tilde{\muv}\in{\lclmargpoly(U)}, \muvt\leq\muv \\ I(\xx \,;\, \tilde{\muv}) \leq 0}}~  Z(\muvt) \geq \max_{\substack{\tilde{\muv}\in{\lclmargpoly(U)}, \tilde{\muv}\leq\muv - \mathbf{I}_{\xx} \\ I(\xx \,;\, \tilde{\muv}) \leq 0}}~  Z(\muvt) = \max_{p\in{\tilde{\mathcal{P}}(\muv)}}{\sum_{\uu\in{U\setminus \xx}}{p(\uu)}}.
\end{align*}
We gather that an equality must hold:
\begin{align*}
\max_{p\in{\mathcal{P}(\muv)}}{\sum_{\uu\in{U\setminus \xx}}{p(\uu)}} = \max_{p\in{\tilde{\mathcal{P}}(\muv)}}{\sum_{\uu\in{U\setminus \xx}}{p(\uu)}} = \max_{\substack{\tilde{\muv}\in{\lclmargpoly(U)}, \tilde{\muv}\leq\muv - \mathbf{I}_{\xx} \\ I(\xx \,;\, \tilde{\muv}) \leq 0}}~  Z(\muvt).
\end{align*}
To complete the proof we need to show the existence a solution $\muvt$ that is optimal for the problem on the right hand side and satisfies $I(\xx \,;\, \muvt) = 0$. Then assume towards contradiction that \eqref{eq:full_higher_than_excluded} holds and there is no optimal solution where $I(\xx \,;\, \muvt) = 0$. Since the problem is feasible, some optimal solution $\muv^*$ does exist and from complementary slackness, there is a corresponding solution $\lambdav^*,0,\boldsymbol{\alpha}^*,\deltav^*$ to \eqref{eq:relaxed_dual_of_relaxed_primal}. Since the value of $\delta_{\xx}$ is $0$, then $\lambdav^*,\boldsymbol{\alpha}^*,\deltav^*$ is also a feasible solution to the dual of:
\begin{align*}
\max_{p\in{\tilde{\mathcal{P}}(\muv)}}{\sum_{\uu\in{U}}{p(\uu)}},
\end{align*}
which means $\lambdav^* \cdot (\muv - \mathbf{I}_{\xx})$ is an upper bound on this problem. To conclude, we concatenate the inequalities we have so far:
\begin{align*}
\max_{p\in{\mathcal{P}}(\muv)}{\sum_{\uu\in{U}}{p(\uu)}} \leq \max_{p\in{\tilde{\mathcal{P}}(\muv)}}{\sum_{\uu\in{U}}{p(\uu)}} \leq \lambdav^* \cdot (\muv - \mathbf{I}_{\xx}) = \max_{p\in{\mathcal{P}(\muv)}}{\sum_{\uu\in{U\setminus \xx}}{p(\uu)}}.
\end{align*}
This inequality contradicts the hard inequality we assumed at the statement of the lemma, therefore there exists an optimal solution where $I(\xx \,;\, \muvt) = 0$ and we can incorporate this equality into the constraints without changing the value of the problem.
\end{proof}

\end{document}